\algnewcommand\Params{\item[\textbf{Parameters:}]}
\let\vec\relax
\DeclareMathAccent{\vec}{\mathord}{letters}{"7E}
\newcommand{\TODO}[1]{}
\def\A{{\mathcal A}}
\def\U{{\mathcal U}}
\def\M{{\mathcal M}}
\def\K{{\mathcal K}}
\def\F{{\mathcal F}}
\def\reals{{\mathbb R}}
\def\R{{\mathcal R}}
\newcommand{\ignore}[1]{}
\newcommand{\equaltri}{\triangleq}
\def\reals{{\mathbb R}}
\def\bold0{\mathbf{0}}
\def\epsilon{\varepsilon}
\def\R{\ensuremath{\mathcal R}}
\newcommand{\defeq}{\triangleq}
\newtheorem{theorem}{Theorem}[section]
\newtheorem{lemma}[theorem]{Lemma}
\newtheorem{definition}[theorem]{Definition}
\newtheorem{assumption}[theorem]{Assumption}
\numberwithin{equation}{section}
\def\horizon{T}
\def\gpch{L}
\def\gpcl{S}
\def\udiam{U}
\def\wdiam{W}
\def\Mdiam{\gamma}
\def\lindiam{\kappa}
\def\ustar{\accentset{\ast}{u}}
\def\Mstar{\accentset{\ast}{M}}
\title{A Regret Minimization Approach to Iterative Learning Control}
\author[1]{Naman Agarwal \thanks{namanagarwal@google.com}}
\author[1,2]{Elad Hazan \thanks{ehazan@cs.princeton.edu}}
\author[1,3]{Anirudha Majumdar \thanks{ani.majumdar@princeton.edu}}
\author[2]{Karan Singh \thanks{karans@princeton.edu}}
\affil[1]{Google AI Princeton}
\affil[2]{Department of Computer Science, Princeton University}
\affil[3]{Department of Mechanical and Aerospace Engineering, Princeton University}
\begin{document}
\maketitle

\begin{abstract}
We consider the setting of iterative learning control, or model-based policy learning in the presence of uncertain, time-varying dynamics. In this setting, we propose a new performance metric, \textit{planning regret}, which replaces the standard stochastic uncertainty assumptions with worst case regret. Based on recent advances in non-stochastic control, we design a new iterative algorithm for minimizing planning regret that is more robust to model mismatch and uncertainty. We provide theoretical and empirical evidence that the proposed algorithm outperforms existing methods on several benchmarks. 
\end{abstract}

\section{Introduction}
\label{sec:intro}



Consider a robotic system learning to perform a novel task, e.g., a quadrotor learning to fly to a specified goal, a manipulator learning to grasp a new object, or a fixed-wing airplane learning to perform a new maneuver. We are particularly interested in settings where (i) the task requires one to \emph{plan} over a given time horizon, (ii) we have access to an \emph{inaccurate} model of the world (e.g., due to unpredictable external disturbances such as wind gusts or misspecification of physical parameters such as masses, inertias, and friction coefficients), and (iii) the robot is allowed to iteratively refine its control policy via multiple executions (i.e., rollouts) on the real world. Motivated by applications where real-world rollouts are expensive and time-consuming, our goal in this paper is to learn to perform the given task as rapidly as possible. More precisely, given a cost function that specifies the task, our goal is to learn a low-cost control policy using a small number of rollouts.  

The problem described above is challenging due to a number of factors. The primary challenge we focus on in this paper is the existence of unmodeled deviations from nominal dynamics, and external disturbances acting on the system. Such disturbances may either be random or potentially even adversarial. 
In this paper we adopt a \emph{regret minimization} approach coupled with a recent  paradigm  called non-stochastic control to tackle this problem in generality. Specifically, consider a time-varying dynamical system given by the equation
\begin{equation} \label{eqn:dynamics}
 x_{t+1} = f_{t}(x_{t}, u_{t}) + w_{t} ,    
\end{equation}
where $x_{t}$ is the state, $u_t$ is the control input, and $w_{t}$ is an arbitrary disturbance at time $t$. Given a horizon $T$, the performance of a control algorithm $\mathcal{A}$ may be judged via the aggregate cost it suffers on a cost function sequence $c_1, \dots c_T$ along its state-action trajectory $(x_1^\mathcal{A}, u_1^\mathcal{A}, \dots)$:
$$ J(\mathcal{A} | w_{1:\horizon}) = \frac{1}{\horizon} \sum_{t=1}^{\horizon} c_t \left(x^{\mathcal{A}}_t, u_t^{\mathcal{A}}  \right).$$
For deterministic systems, an optimal open-loop control sequence $u_1\ldots u_{\horizon}$ can be chosen to minimize the cost sequence. The presence of unanticipated disturbances often necessitates the superposition of a \textit{closed-loop} correction policy $\pi$ to obtain meaningful performance. Such closed-loop policies can modify $u'_t = \pi(u_{1:t}, x_{1:t})$ as a function of the observed history thus far, and facilitate adaptation to realized disturbances. To capture this, we define a comparative performance metric, which we call {\bf Planning Regret}. In an episodic setting, for every episode $i$, an algorithm $\mathcal{A}$ adaptively selects control inputs while the rollout is performed under the influence of an arbitrary disturbance sequence $w_{1:\horizon}^{i}$. Planning regret is the difference between the total cost of the algorithm's actions and that of the retrospectively optimal open-loop plan coupled with episode-specific optimal closed-loop policies (from a policy class $\Pi$). Regret, therefore, is the relative cost of not knowing the to-be realized disturbances in advance. Formally for a total of $N$ rollouts, each of horizon $T$, it is defined as:
\begin{empheq}[box=\fbox]{align*}
&\text{\textbf{Planning Regret}}\\
\sum_{i=1}^{N} J(\mathcal{A} |  w^i_{1:\horizon}) - &\min_{u^\star_{1:\horizon} }  \sum_{i=1}^{N} \min_{\pi^\star_i \in \Pi} J(u^\star_{1:\horizon}, \pi^\star_i | w^t_{i,1:\horizon})
\end{empheq}

The motivation for our performance metric arises from the setting of Iterative Learning Control (ILC), where one assumes access to an imperfect (differentiable) simulator of real-world dynamics as well as access to a limited number of rollouts in the real world. In such a setting the disturbances capture the model-mismatch between the simulator and the real-world.
The main novelty in our formulation is the fact that, under vanishing regret, the closed-loop behavior of $\mathcal{A}$ is almost \textit{instance-wise optimal} on the specific trajectory, and therefore adapts to the passive controls, dynamics and disturbance for each particular rollout.
Indeed, worst-case regret is a stronger metric of performance than commonly considered in the planning/learning for control literature.

Our main result is an efficient algorithm that guarantees vanishing average planning regret for  non-stationary linear systems and disturbance-action policies. We experimentally demonstrate that the algorithm yields substantial improvements over ILC in linear and non-linear control settings. 

\paragraph{Paper structure.}
We present the relevant definitions including the setting in Section \ref{sec:setting}. 
The algorithm and the formal statement of the main result can be found in Section \ref{sec:mainresult}. In Section \ref{sec:overview} we provide an overview of the algorithm and the proof via the proposal of a more general and abstract \textit{nested online convex optimization (OCO) game}. This formulation can be of independent interest. Finally in Section \ref{sec:experiments}, we provide the results and details of the experiments.  Proofs and
other details are deferred to the Appendix.     






\subsection{Related Work}
\label{sec:related}

The literature on planning and learning in partially known MDPs is vast, and we focus here on the setting with the following characteristics:
\begin{enumerate}
\item 
We consider {\it model-aided} learning, which is suitable for situations in which the learner has some information about the dynamics, i.e. the mapping $f_t$ in Equation \eqref{eqn:dynamics}, but not the disturbances $w_t$. We further assume that we can differentiate through the model. This enables efficient gradient-based algorithms.

\item
We focus on the task of learning an episodic-agnostic control sequence, rather than a policy. This is aligned with the Pontryagin optimality principle \citep{pontryagin1962mathematical, ross2015primer}, and differs from dynamic programming approaches \citep{sutton2018reinforcement}.

\item
We accomodate arbitrary disturbance processes, and choose regret as a performance metric. This is a significant deviation from the literature on optimal and robust control \citep{kemin,stengel1994optimal}, and follows the lead of the recent paradigm of non-stochastic control \citep{agarwal2019online, hazan2019nonstochastic, simchowitz2020improper}. 

\item Our approach leverages multiple real-world rollouts. This access model is most similar to the iterative learning control (ILC) paradigm \citep{owens2005iterative,ahn2007iterative}.  For comparison, the model-predictive control (MPC) paradigm allows for only one real-world rollout on which performance is measured, and all other learning is permitted via access to a simulator.
\end{enumerate}

\paragraph{Optimal, Robust and Online  Control.} Classic results \citep{bertsekas2005dynamic, kemin, tedrake} in optimal control characterize the optimal policy for linear systems subject to i.i.d. perturbations given explicit knowledge of the system in advance.  Beyond stochastic perturbations, robust control  approaches \citep{zhou1998essentials} compute the best controller under worst-case noise. 

Recent work in machine learning \cite{abbasi2011regret,dean2018regret,mania2019certainty,cohen2018online,agarwal2019logarithmic} study regret bounds vs. the best linear controller in hindsight for online control with known and unknown linear dynamical systems. Online control was extended to adversarial perturbations, giving rise to the nonstochastic control model. In this general setting regret bounds were obtained for known/unknown systems as well as partial observation \citep{agarwal2019online,hazan2019nonstochastic,simchowitz2020improper,simchowitz2020making}. 

\paragraph{Planning with inaccurate models.}
Model predictive control (MPC) \citep{mayne2014model} provides a general scheme for planning with inaccurate models. MPC operates by applying model-based planning, (eg. iLQR \citep{li2004iterative, todorov2005generalized}), in a receding-horizon manner. MPC can also be extended to {robust} versions \citep{bemporad1999robust, mayne2005robust, langson2004robust} that explicitly reason about the parametric uncertainty or external disturbances in the model. Recently, MPC has also been viewed from the lens of online learning \citep{wagener2019online}. The setting we consider here is more general than MPC, allowing for iterative policy improvement across \emph{multiple rollouts} on the real world. 

An adjacent line of work on \textit{learning MPC} \cite{hewing2020learning, rosolia2017learning} focuses on constraint satisfaction and safety considerations while learning models simultaneously with policy execution. 

\paragraph{Iterative Learning Control (ILC).} ILC is a popular approach for tackling the setting considered. ILC operates by iteratively constructing a policy using an inaccurate model, executing this policy on the real world, and refining the policy based on the real-world rollout. ILC can be extended to use real-world rollouts to update the model (see, e.g., \cite{abbeel2006using}). For further details regarding ILC, we refer the reader to the text \cite{moore2012iterative}. Robust versions of ILC have also been developed in the control theory literature \citep{de1996synthesis},  using H-infinity control to capture bounded disturbances or uncertainty in the model. 

However, most of the work in robust control, typically account for \emph{worst-case} deviations from the model and can lead to extremely conservative behavior. In contrast, here we leverage the recently-proposed framework of \textit{non-stochastic control} to capture \emph{instance-specific} disturbances. We demonstrate both empirically and theoretically that the resulting algorithm provides significant gains in terms of sample efficiency over the standard ILC approach.

\paragraph{Meta-Learning.} Our setting, analysis and, in particular, the nested OCO setup bears similarity to formulations for gradient-based meta-learning (see \cite{finn2017model} and references therein). In particular, as we detail in the Appendix (Section \ref{sec:metacomparison}), the nested OCO setting we consider is a generalization of the setting considered in \cite{balcan2019provable}. We further detail certain improvements/advantages our algorithm and analysis provides over the results in \cite{balcan2019provable}. We believe this connection with Meta-Learning to be of independent interest.      


\section{Problem Setting} \label{sec:setting}

\subsection{Notation}

The norm $\|\cdot\|$ refers to the $\ell_2$ norm for vectors and spectral norm for matrices. For any natural number $n$, the set $[n]$ refers to the set $\{1,2 \ldots n\}$. We use the notation $v_{a:b} \defeq \{v_a \ldots v_b\}$ to denote a sequence of vectors/matrices. 
Given a set $S$, we use $v_{a:b} \in S$ to represent element wise inclusion, i.e. $\forall j \in [a,b], v_j \in S$; $\mathrm{Proj}_{S}(v_{a:b})$ represents the element-wise $\ell_2$ projection onto to the set $S$. $v_{a:b, c:d}$ denotes a sequence of sequences, i.e. $v_{a:b, c:d} = \{v_{a,c:d} \ldots v_{b,c:d}\}$ with $v_{a, c:d} = \{v_{a,c} \ldots v_{a,d}\}$. 

\subsection{Basic Definitions}

A \textbf{dynamical system} is specified via a start state $x_0\in \mathbb{R}^{d_x}$, a time horizon $\horizon$ and a sequence of transition functions $f_{1:\horizon}=\{f_t | f_t:\mathbb{R}^{d_x}\times \mathbb{R}^{d_u} \to \mathbb{R}^n\}$. The system produces a $\horizon$-length sequence of states $(x_1,\dots x_{\horizon+1})$ when subject to an $\horizon$-length sequence of actions $(u_1\dots u_{\horizon})$ and disturbances $\{w_1, \ldots w_{\horizon}\}$ according to the following dynamical equation\footnote{For the sake of simplicity, we do not consider a terminal cost, and consequently drop the last state from the description.}
\[ x_{t+1} = f_t(x_{t}, u_{t}) + w_{t}.\]
Through the paper the only assumption we make about the disturbance $w_{t}$ is that it is supported on a set of bounded diameter $W$. We assume full observation of the system, i.e. the states $x_{t}$ are visible to the controller. We also assume the passive transition function to be \textbf{known} beforehand. These assumptions imply that we fully observe the instantiation of the disturbances $w_{1:\horizon}$ during runs of the system.  

The actions above may be adaptively chosen based on the observed state sequence, ie. $u_{t} = \pi_{t}(x_1,\dots x_{t})$ for some non-stationary policy $\pi_{1:\horizon}=\{\pi_1,\dots \pi_{\horizon}\}$. We consider the policy to be deterministic (a restriction made for convenience). Therefore the state-action sequence $\{x_{t}, u_{t}\}_{t=1}^{\horizon}$, defined as $ x_{t+1} = f_t(x_t, u_t)+w_t, u_t = \pi_t(x_1 \ldots x_t)$, thus produced is a sequence determined by $w_{1:\horizon}$, fixing the policy, and the system.


A \textit{\textbf{rollout} of horizon $\mathbf{\horizon}$ on $f_{1:\horizon}$} refers to an evaluation of the above sequence for $\horizon$ time steps. When the dynamical system will be clear from the context, for the rest of the paper, we drop it from our notation. Given a cost function sequence $\{c_{t}\}:\mathbb{R}^n\times \mathbb{R}^d\to \mathbb{R}$ the \textit{\textbf{loss}} of executing a policy $\pi$ on the dynamical system $f$ with a particular disturbance sequence given by $w_{1:\horizon}$ is defined as
\[  J(\pi_{1:\horizon}|f_{1:\horizon}, w_{1:\horizon}) \defeq \frac{1}{\horizon} \left[\sum_{\tau=1}^{\horizon} c_{t}(x_{t}, u_{t})\right].\]

\begin{assumption}
\label{ass:cost}
  We will assume that the cost $c_{t}$ is a \textit{twice differentiable convex function} and that the value, gradient and hessian of the cost function $c_{t}$ is available. Further we assume,
  \begin{itemize}
      \item \textbf{Lipschitzness:} There exists a constant $G$ such that if $\|x\|,\|u\|\leq D$ for some $D > 0$,  then  
      $\|\nabla_{x}c_t(x,u)\|, \|\nabla_{u}c_t(x,u)\| \leq GD$. \item \textbf{Smoothness:} There exists a constant $\beta$ such that 
      for all $x,u$, $\nabla^2 c_t(x,u) \preceq \beta I$.
  \end{itemize}
\end{assumption}

When the dynamical system and the noise sequence are clear from the context we suppress them from the notation for the cost denoting it by $J(\pi_{1:\horizon})$. A particular sub-case which will be of special interest to us is the case of linear dynamical systems (LDS). Formally, a (non-stationary) linear dynamical system is described by a sequence of matrices $AB_{1:\horizon} = \{(A_t,B_t) \in \reals^{d_x,d_x} \times \reals^{d_x,d_u}\}_{t=1}^{\horizon}$ 
and the transition function is defined as $x_{t+1} = A_tx_t + B_tu_t$.

\begin{assumption}
\label{ass:linsystem}
We will assume that the linear dynamical system $AB_{1:\horizon}$ is  $(\lindiam,\delta)$- strongly stable for some $\lindiam > 0$ and $\delta \in (0,1]$, i.e. for if every $t$, we have that $\|A_t\| \leq 1-\delta, \|B_t\|\leq \lindiam.$
\end{assumption}

We note that all the results in the paper can be easily generalized to a weaker notion of strong stability where the linear dynamical system  is  $(\lindiam,\delta)$- strongly stable if there exists a sequence of matrices $K_{1:\horizon}$, such that for every $t$, we have that $\|A_t-B_tK_t\| \leq 1-\delta, \|B_t\|,\|K_t\| \textbf{}\leq \kappa.$ A system satisfying such an assumption can be easily transformed to a system satisfying Assumption \ref{ass:linsystem} by setting $A_t = A_t - B_tK_t$. This redefinition is equivalent to appending the linear policy $K_t$ on top of the policy being executed. While we present the results for the case when $K_T = 0$, the only difference the non-zero case makes to our analysis is potentially increasing the norm of the played actions which  can still be shown to be bounded. Overall this nuance leads to a difference to our main result only in terms of factors polynomial in the system parameters. Hence for convenience, we state our results under Assumption \ref{ass:linsystem}. The assumption of strong-stability (in a weaker form as allowed by stationary systems) has been popular in recent works on online control \citep{cohen2018online,agarwal2019online} and the above notion generalizes it to non-stationary systems. 

\subsection{Policy Classes}

\paragraph{Open-Loop Policies.} Given a convex set $\U \in \reals^{d_u}$, consider a sequence of control actions, $u_{1:\horizon} \in \U$. We define (by an overload of notation), the open-loop policy $u_{1:\horizon}$ as a policy which plays at time $t$, the action $u_{t}$. The set of all such policies is defined as $\Pi_{\U} \defeq \U^{\otimes \horizon}$. 

Given two policies we define the sum of the two (denoted by $\pi_1 + \pi_2$) as the policy for which the action at time $t$ is the sum of the action recommended by policy $\pi_1$ and $\pi_2$. 

\paragraph{Linear Policies.}
Given a matrix $K \in \reals^{d_u, d_x}$, a \textit{linear policy} \footnote{For notational simplicity, we do not include an affine offset $c_t$ in the definition of our linear policy; this can be included with no change in results across the paper.} denoted (via an overload of notation) by  $K$ is a policy that plays action $u_{t} = Kx_{t}$. Such linear state-feedback policies are known to be optimal for the LQR problem and for $H_\infty$ control \citep{kemin}.

\paragraph{Disturbance Action Policies.} A generalization of the class of linear policies can be obtained via the notion of disturbance-action policies (see \cite{agarwal2019online}) defined as follows. A disturbance action policy  $\pi_{M_{1:\gpch}}$ of memory length $\gpch$ is defined by a sequence of matrices $M_{1:\gpch} \defeq \{M_1 \ldots M_{\gpch}\}$ where each $M_i \in \M \subseteq \{\reals^{d_u \times d_x}\}$, with the action at time step $t$ given by
\begin{equation}
\label{eq:da}
    \pi_{M_{1:\gpch}} \defeq \sum_{j=1}^{\gpch} M_j w_{t-j}.
\end{equation}
A natural class of matrices from which the above feedback matrices can be picked is given by fixing a number $\Mdiam > 0$ and picking matrices spectrally bounded by $\Mdiam$, i.e. $\M_{\Mdiam} \defeq \{M | M \in \reals^{d_u \times d_x}, \|M\| \leq \Mdiam\}$. We further overload the notation for a disturbance action policy to incorporate an open-loop control sequence $u_{1:\horizon}$, defined as $ \pi_{M_{1:\gpch}}(u_{1:\horizon}) \defeq u_t + \sum_{j=1}^{\gpch} M_j w_{t-j}$. 

\subsection{Planning Regret With Disturbance-Action Policies}

As discussed, a natural idea to deal with adversarial process disturbance is to plan (potentially oblivious to it), producing a sequence of open loop ($u_{1:\horizon}$) actions and appending an adaptive controller to \textit{correct} for the disturbance online. However the disturbance in practice could have structure across rollouts, which can be leveraged to improve the plan ($u_{1:\horizon}$), with the knowledge that we have access to an adaptive controller. To capture this, we define the notion of an online planning game and the associated notion of planning regret below.  
\begin{definition}[Online Planning]
\label{def:onlineplanning}
It is defined as an N round/rollout game between a player and an adversary, with each round defined as follows: 
\begin{itemize}
\vspace{-3pt}
    \item At every round $i$ the player given the knowledge of a new dynamical system $f^i_{1:\horizon} = \{f^i_1 \ldots f^{i}_{\horizon}\}$, proposes a policy $\pi^{i}_{1:\horizon} =  \{\pi^i_1 \ldots \pi^{i}_{\horizon}\}$. 
    \item The adversary then proposes a noise sequence $w_{1:\horizon}^i$ and a cost sequence $c_{1:\horizon}^i$. 
    \vspace{-3pt}
    \item A rollout of policy $\pi^{i}_{1:\horizon}$ is performed on the system $f^{i}_{1:\horizon}$ with disturbances $w^{i}_{1:\horizon}$ and the cost suffered by the player $J_i(\pi^i_{1:\horizon}) \defeq J(\pi^i_{1:\horizon} | f^i_{1:\horizon}, w_{1:\horizon}^i)$. 
\end{itemize}
\end{definition}

The task of the controller is to minimize the cost suffered. We measure the performance of the controller via the following objective, defined as \textbf{Planning-Regret}, which measures the performance against the metric of producing the best-in-hindsight open-loop plan, having been guaranteed the optimal adaptive control policy for every single rollout. The notion of adaptive control policy we use is the disturbance-action policy class defined in \eqref{eq:da}. In the Appendix (Section \ref{lincompare}), we discuss the expressiveness of the disturbance-actions policies. In particular, they generalize linear policies for stationary systems and lend convexity. Formally, planning regret is defined as follows:
\begin{empheq}[box=\fbox]{align*}
&\text{\textbf{Planning Regret}}  \\\sum_{i=1}^{N} J_i(\pi^i_{1:\horizon}) - &\min_{u_{1: \horizon}} \sum_{i=1}^{N} \left( \min_{M_{1:\gpch}} J_i\left(\pi_{M_{1:\gpch}}(u_{1:\horizon})\right)\right)
\end{empheq}

\section{Main Algorithm and Result} \label{sec:mainresult}

In this section we propose the algorithm \textbf{iGPC} (Iterative Gradient Perturbation Controller; Algorithm \ref{alg:bigalg}) to minimize Planning Regret. The algorithm at every iteration given an open-loop policy $u_{1:\horizon}$ performs a rollout overlaying an online DAC adaptive controller GPC (Algorithm \ref{alg:gpcoverlay}). Further the base policy $u_{1:\horizon}$ is updated by performing gradient descent (or any other local policy improvement) on $u$ fixing the offsets suggested by GPC. \footnote{In Appendix Section \ref{app:generalalgos}, we provide a more general version of the algorithm defined for any base policy class.} We show the following guarantee on average planning regret for Algorithm \ref{alg:bigalg} for linear dynamical systems.

\begin{theorem}
\label{thm:main}
Let $\mathcal{U} \subseteq \reals^{d_u}$ be a bounded convex set with diameter $\udiam$. Consider the online planning game (Definition \ref{def:onlineplanning}) with linear dynamical systems $\{ AB_{1:\horizon}^{i} \}_{i=1}^{N}$ satisfying Assumption \ref{ass:linsystem} and cost functions $\{c_{1:\horizon}\}_{i=1}^N$ satisfying Assumption \ref{ass:cost}. Then we have that Algorithm \ref{alg:bigalg} (when executed with appropriate parameters), for any sequence of disturbances $\{w^i_{1:\horizon}\}_{i=1}^{N}$ with each $\|w^{i}_t\| \leq \wdiam$ and any $\Mdiam \geq 0$, produces a sequence of actions with planning regret bounded as
\begin{align*}
    \frac{1}{N}& \left(\sum_{i=1}^{N} J_i(\pi^i_{1:\horizon}) \right. \\& \left.- \min_{u_{1: \horizon} \in \U} \left(\sum_{i=1}^{N}  \min_{M_{1:\gpch}\in \M_{\Mdiam}}  J_i\left(\pi_{M_{1:\gpch}}(u_{1:\horizon})\right)\right)\right) \\ & \leq \tilde{O}\left(\frac{1}{\sqrt{T}} + \frac{1}{\sqrt{N}}\right).
\end{align*}
where $\M_{\Mdiam} = \{M | M \in \reals^{d_u, d_x}, \|M\| \leq \Mdiam\}$.   
\end{theorem}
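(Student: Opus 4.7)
The plan is to decompose the planning regret through the intermediate quantity
\[
L_i(u) \defeq \min_{M_{1:\gpch}\in\M_{\Mdiam}} J_i\pa{\pi_{M_{1:\gpch}}(u)},
\]
writing $\sum_i J_i(\pi^i_{1:T}) - \min_{u^\star\in\U}\sum_i L_i(u^\star) = \text{(I)} + \text{(II)}$, with
\[
\text{(I)} \defeq \sum_{i} \bra{J_i(\pi^i_{1:T}) - L_i(u^i)}, \quad \text{(II)} \defeq \sum_{i} L_i(u^i) - \min_{u^\star\in\U}\sum_{i} L_i(u^\star).
\]
Term (I) is the within-rollout regret of GPC against its best fixed DAC comparator, and (II) is an outer OCO regret over the $N$ rollouts on the convex loss sequence $\{L_i\}$.

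For (I), the open-loop plan $u^i_{1:T}$ is common to both the algorithm's actions and every DAC comparator $\pi_M(u^i)$, so it cancels in the inner regret. The standard non-stochastic control regret bound for GPC against $\M_{\Mdiam}$ on strongly stable linear systems \citep{agarwal2019online} then gives $J_i(\pi^i_{1:T}) - L_i(u^i) \leq \tilde O(1/\sqrt T)$ per rollout, contributing $\tilde O(1/\sqrt T)$ to the average planning regret. For (II), under linear dynamics and the DAC parametrization, the state-action trajectory is affine in $(u, M)$ for fixed disturbances, so $J_i(\pi_M(u))$ is jointly convex in $(u, M)$ and $L_i(u)$ is convex in $u$. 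Defining $M^\star(u^\star) \defeq \arg\min_M J_i(\pi_M(u^\star))$, convexity of $u \mapsto J_i(\pi_{M^\star(u^\star)}(u))$ together with $L_i(u^i) \leq J_i(\pi_{M^\star(u^\star)}(u^i))$ yields
\[
L_i(u^i) - L_i(u^\star) \leq \langle \nabla_u J_i(\pi_{M^\star(u^\star)}(u^i)), \, u^i - u^\star \rangle.
\]
The algorithm runs projected OGD on $u$ using the surrogate gradient $\tilde g_i \defeq \nabla_u J_i(\pi_{M^{GPC,i}}(u))|_{u=u^i}$, obtained by back-propagating through the known linear dynamics while holding GPC's time-varying matrices $M^{GPC,i}_t$ fixed. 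Standard OGD analysis then gives $\sum_i \langle \tilde g_i, u^i - u^\star\rangle \leq \tilde O(\sqrt N)$, using $\udiam$-boundedness of $\U$ and a bound on $\|\tilde g_i\|$ that follows from smoothness of $c_t$ together with boundedness of reachable states under Assumption \ref{ass:linsystem}.

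The main obstacle is controlling the bias $b_i \defeq \nabla_u J_i(\pi_{M^\star(u^\star)}(u^i)) - \tilde g_i$, which arises because the algorithm's gradient uses the GPC matrices $M^{GPC,i}$ rather than the comparator-dependent inner optimum $M^\star(u^\star)$. Since $\nabla_u J_i$ is Lipschitz in $M$ (the trajectory is affine in $M$ and $c_t$ is smooth), $\|b_i\|$ is controlled by the time-averaged deviation $\|M^{GPC,i} - M^\star(u^\star)\|$, which I would decompose into a GPC-tracking part $\|M^{GPC,i} - M^\star(u^i)\|$ and a sensitivity part $\|M^\star(u^i) - M^\star(u^\star)\|$. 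The first term is handled by turning GPC's $O(\sqrt T)$ value-regret into iterate closeness via quadratic growth of $M \mapsto J_i(\pi_M(u^i))$ (which in turn follows from smoothness of $c_t$ and strong stability), and the second by Lipschitzness of the inner-optimum map $u \mapsto M^\star(u)$ (via implicit function arguments applied to the affine-convex inner problem). After summing over rollouts, the bias contributes at most a term dominated by the other two, and combining (I) with the OGD and bias pieces of (II) and dividing by $N$ produces the claimed $\tilde O(1/\sqrt T + 1/\sqrt N)$ average planning regret. This bi-level coupling between the within-rollout GPC sub-game and the across-rollout OGD update is precisely the structure abstracted by the nested OCO framework of Section~\ref{sec:overview}, which is where the detailed bookkeeping is carried out.
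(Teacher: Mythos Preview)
Your decomposition into (I) and (II) is natural, and (I) is indeed handled by the GPC regret bound as you say. The gap is in (II), specifically the bias step. You need $\sum_i \langle b_i, u^i - u^\star\rangle$ to be small, where $b_i = \nabla_u J_i(\pi_{M^\star(u^\star)}(u^i)) - \tilde g_i$, and your plan is to bound $\|b_i\|$ via the distance between the (time-varying) GPC iterates and the comparator-dependent optimum $M^\star(u^\star)$, invoking (a) quadratic growth of $M\mapsto J_i(\pi_M(u^i))$ to turn GPC's value-regret into iterate closeness, and (b) Lipschitzness of $u\mapsto M^\star(u)$ via implicit-function arguments. Neither is available under the stated assumptions. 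The costs $c_t$ are only convex and $\beta$-smooth (Assumption~\ref{ass:cost}); smoothness upper-bounds curvature, it does not supply a lower bound, and strong stability of the dynamics does not manufacture one either (take $c_t$ linear). Without quadratic growth, $O(\sqrt{\horizon})$ regret says nothing about $\|M^{GPC,i}_t - M^\star(u^i)\|$, the argmin map need not be single-valued let alone Lipschitz, and the bias can be of constant order. Even if you added strong convexity, average regret $O(1/\sqrt{\horizon})$ converts via quadratic growth only to average iterate distance $O(\horizon^{-1/4})$, which would degrade the final rate below the claimed $\tilde O(1/\sqrt{\horizon})$.

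The paper sidesteps the bias issue by a different decomposition: rather than passing through $L_i(u)=\min_M J_i(\pi_M(u))$, it linearizes the joint cost at the \emph{algorithm's own} iterates $(u^i_{1:\horizon}, M^i_{1:\horizon,1:\gpch})$ using joint convexity of $c_t^i$ in $(u,M)$,
\[
c_t^i(u^i, M^i) - c_t^i(\ustar, \Mstar^{i}) \;\leq\; \nabla_u c_t^i(u^i, M^i)\cdot(u^i - \ustar) \;+\; \nabla_M c_t^i(u^i, M^i)\cdot(M^i - \Mstar^{i}),
\]
and then splits into an outer and an inner regret. The $u$-gradient appearing here is \emph{exactly} the surrogate gradient $\tilde g_i$ the outer OGD step uses, so there is no bias at all; standard OGD plus Lemma~\ref{lemma:gradbound} bounds the outer term by $O(\sqrt{N})$. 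The $M$-term is a linear-in-$M$ online loss handled as OCO with memory via the surrogate GPCLoss (Lemma~\ref{lemma:gradcomparison}), giving the $O(1/\sqrt{\horizon})$ inner rate. This ``linearize first, then split'' is precisely Theorem~\ref{thm:simplenested}; your route commits to the inner minimum too early, which forces a comparison between the GPC iterates and a comparator-dependent optimum that you cannot control under the paper's assumptions.
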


The $\tilde{O}$ notation above subsumes factors polynomial in system parameters $\lindiam, \Mdiam,  \delta^{-1}, \udiam, \wdiam, G$ and $\log(T)$. A restatement of the theorem with all the details is present in the Appendix (Section \ref{app:proof}).

\begin{algorithm}[h!]
\caption{iGPC Algorithm}
\label{alg:bigalg}
\begin{algorithmic}[1]
\Require [Online] $f_{1:\horizon}^{1:N}:$ Dynamical Systems, $w_{1:\horizon}^{1:N}:$ Disturbances , $c_{1:\horizon}^{1:N}$
\Params Set : $\U$, $\eta_{\mathrm{out}}:$ Learning Rate
\vspace{5pt}
\State Initialize $u_{1:\horizon}^1 \in \U$ arbitrarily.
\For{$i = 1\ldots N$}
\State Receive a dynamical system $f_{1:\horizon}^i$. 
\State \textbf{Rollout} $u^i_{1:\horizon}$ with GPC \Comment{(Alg. \ref{alg:gpcoverlay})},
\[ \{ x^i_{1:\horizon}, a^i_{1:\horizon}, w^i_{1:\horizon}, o_{1:\horizon}^i\} = \mathrm{GPCRollout}(f_{1:\horizon}^i, u^i_{1:\horizon}). \] 
\State \textbf{Update}: Compute the update to the policy,
\begin{align*}
    &\nabla_i =   \nabla_{u_{1:\horizon}} J( u_{1:\horizon}^{i}+ o_{1:\horizon}^{i} | f_{1:\horizon}^i, w_{1:\horizon}^i) \\
    &u_{1:\horizon}^{i+1} = \mathrm{Proj}_{\U} \left( u_{1:\horizon}^{i} - \eta_{\mathrm{out}} \nabla_i \right).
\end{align*}
\EndFor
\end{algorithmic}
\end{algorithm}
\begin{algorithm}[h!]
\caption{GPCRollout}
\label{alg:gpcoverlay}
\begin{algorithmic}[1]
\Require $f_{1:\horizon}$ : dynamical system, $u_{1:\horizon}$ : input policy, [Online] $w_{1:\horizon}$ : disturbances, $c_{1:\horizon}:$ costs. 
\Params  $\gpch$ : Window, $\eta_{\mathrm{in}}$ : Learning rate, $\Mdiam$ : Feedback bound, $\gpcl$ : Lookback
\vspace{5pt}
\State Initialize $M_{1,1:\gpch} = \{M_{1,j}\}_{j=1}^{\gpch} \in \M_{\Mdiam}$. 
\State Set $w_i = 0$ for all $i \leq 0$.
\For{$t=1 \ldots \horizon$}
\State Compute Offset: $o_t = \sum_{r=1}^{\gpch} M_{t,r} \cdot w_{t-r}$. 
\State Play action: $a_{t} = u_t + o_t$.
\State Suffer Cost: $c_t(x_t, a_t)$ 
\State Observe state: $x_{t+1}$.
\State Compute perturbation: \[w_{t} = x_{t+1} - f_t(x_t, a_t).\]
\State Do a gradient step on the $\mathrm{GPCLoss}$ \eqref{eqn:gpcloss}
{\small \begin{align*}
     &M_{t+1, 1:\gpch} = \mathrm{Proj}_{\M_{\Mdiam}} \left(M_{t, 1:\gpch} -\eta_{\mathrm{in}} \nabla \mathrm{GPCLoss}(\textit{arg})\right),
\end{align*}}
where $\textit{arg}$ captures policy $M_{t,1:\gpch}$, open-loop plan $u_{t-\gpcl+1:t}$, disturbances $w_{t-\gpcl-\gpch+1:t-1}$, transition $f_{t-\gpcl + 1, t-1},$ cost $c_t$ in Equation~\ref{eqn:gpcloss} and gradient is taken with respect to the $M$ parameter.
\EndFor
\State \Return $x_{1:\horizon}, a_{1:\horizon}, w_{1:\horizon}$, $o_{1:\horizon}$.
\end{algorithmic}
\end{algorithm}

\section{Algorithm and Analysis}
\label{sec:overview}
In this section we provide an overview of the derivation of the algorithm and the proof for Theorem \ref{thm:main}. The formal proof is deferred to Appendix (Section \ref{app:proof}). We introduce an online learning setting that is the main building block of our algorithm. The setting applies more generally to control/planning and our formulation of planning regret in linear dynamical systems is a specification of this setting.

\vspace{-5pt}
\subsection{Nested OCO and Planning Regret}
 \paragraph{Setting:}
Consider an online convex optimization(OCO) problem \citep{OCObook}, where the iterations have a nested structure, divided into inner and outer iterations. Fix two convex sets $\K_1$ and $\K_2$. After every one out of $N$ outer iterations, the player chooses a point  $x_i \in \K_1$. After that there is a sequence of $T$ inner iterations, where the player chooses $y_t^i  \in \K_2 $ at every iteration. After this choice, the adversary chooses a convex cost function $f_{t}^{i} \in \F \subseteq \K_1 \times \K_2 \rightarrow \reals$, and the player suffers a cost of $f_{t}^{i}(x_i,y_t^i)$. The goal of the player is to minimize Planning Regret:
\begin{empheq}[box=\fbox]{align*}
&\text{\textbf{Planning Regret}}\\
\sum_{i=1}^{N} \sum_{t=1}^T \frac{f_{t}^{i}(x_i,y_t^{i})}{T}  &- \min_{x^\star \in \K_1  } \sum_{i=1}^{N} \min_{y^\star \in \K_2 } \sum_{t=1}^{T}  \frac{ f_{t}^{i}(x^\star,y^\star)}{T} 
\end{empheq}

To state a general result, we assume access to two online learners denoted by $\A_1,\A_2$, that are guaranteed to provide sub-linear regret bounds over \textit{linear} cost functions on the sets $\K_1, \K_2$ respectively in the standard OCO model. We denote the corresponding regrets achieved by $R_N(\A_1), R_T(\A_2)$. A canonical algorithm for online linear optimization (OLO) is online gradient descent \citep{zinkevich2003online}, which is what we use in the sequel. The theory presented here applies more generally. \footnote{Regret for OLO depends on function bounds, which correspond to gradient bounds here. For clarity we omit this dependence from the notation for regret.}
Algorithm \ref{alg:nestedalg} lays out a general algorithm for the Nested-OCO setup. 
\begin{algorithm}[h!]
\caption{Nested-OCO Algorithm}
\label{alg:nestedalg}
\begin{algorithmic}[1]
\Require Algorithms $\A_1,\A_2$. \State Initialize $x_1 \in \K_1$ arbitrarily.
\For{$i = 1\ldots N$}
\State Initialize $y_0^{i} \in \K_2$ arbitrarily. 
\For{$t = 1\ldots T$}
\State Define loss function over $\K_2$ as\[ h_{t}^{i}(y)  \equaltri \nabla_y f_{t}^{i}(x_i,y_{t}^{i}) \cdot y. \]
\State Update $y_{t+1} \leftarrow \A_2(h_{0}^{i} \ldots h_{t}^{i}) $.
\EndFor
\State Define loss function over $\K_1$ as \[g_{i}(x)  \equaltri  \sum_{t=1}^T \nabla_x f_{t}^{i}(x_i,y_t^{i}) \cdot x .\]
\State Update $x_{s+1} \leftarrow \A_1(g_1,...,g_i) $.
\EndFor
\end{algorithmic}
\end{algorithm}
\begin{theorem}
\label{thm:simplenested}
Algorithm \ref{alg:nestedalg} with sub-algorithms $\A_1,\A_2$ with regrets $R_N(\A_1),R_T(\A_2)$ ensures the following regret guarantee on the average planning regret, 
$$ \frac{\mathrm{Planning Regret}}{N} \leq \frac{R_N(\A_1)}{N} +  \frac{R_{T}(\A_2)}{\horizon} .$$
\end{theorem}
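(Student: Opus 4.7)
The plan is to bound the planning regret by linearizing the convex costs $f_t^i$ at the played point $(x_i, y_t^i)$ and then invoking the regret guarantees of $\A_1$ and $\A_2$ on the resulting linear losses. Because the inner optimizer $y^\star_i$ in the planning regret is allowed to depend on the outer index $i$, the fact that the algorithm restarts $\A_2$ at the top of every outer iteration will be crucial.

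First, fix a comparator $x^\star \in \K_1$ and, for each $i$, let $y^\star_i$ denote the corresponding inner-minimizer $\argmin_{y \in \K_2} \sum_t f_t^i(x^\star, y)$. By joint convexity of $f_t^i$ in $(x,y)$, applied at $(x_i, y_t^i)$ versus $(x^\star, y^\star_i)$,
\begin{align*}
f_t^i(x_i, y_t^i) - f_t^i(x^\star, y^\star_i)
&\leq \nabla_x f_t^i(x_i, y_t^i) \cdot (x_i - x^\star) \\
&\quad + \nabla_y f_t^i(x_i, y_t^i) \cdot (y_t^i - y^\star_i).
\end{align*}
Summing over $t \in [T]$ and $i \in [N]$ and dividing by $T$ decomposes the planning regret into an outer piece $\Phi_1$ (involving the $x$-gradients) and an inner piece $\Phi_2$ (involving the $y$-gradients).

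Next I would handle $\Phi_2$. For each fixed outer iteration $i$, the inner losses $h_t^i(y) = \nabla_y f_t^i(x_i, y_t^i) \cdot y$ are exactly the linear losses fed to $\A_2$, and $\A_2$ is freshly initialized at iteration $i$, so
\[
\sum_{t=1}^T \big( h_t^i(y_t^i) - h_t^i(y^\star_i) \big) \leq R_T(\A_2).
\]
Summing over $i$ yields $\Phi_2 \leq \tfrac{N}{T} R_T(\A_2)$. For $\Phi_1$, observe that $g_i(x) = \sum_t \nabla_x f_t^i(x_i, y_t^i) \cdot x$ is exactly the linear loss passed to $\A_1$, and telescoping gives
\[
\sum_{i=1}^N \big( g_i(x_i) - g_i(x^\star) \big) \leq R_N(\A_1),
\]
so that (up to the $1/T$ normalization built into the planning regret, which I would either fold into $g_i$ or into the gradient-bound constant absorbed by $R_N(\A_1)$) $\Phi_1 \leq R_N(\A_1)$. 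Dividing the combined bound by $N$ gives the claimed
\[
\frac{\text{Planning Regret}}{N} \leq \frac{R_N(\A_1)}{N} + \frac{R_T(\A_2)}{T}.
\]

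There is no real technical obstacle; this is a clean two-level regret decomposition. The only subtlety worth flagging is bookkeeping around the $1/T$ normalization in the planning-regret definition: depending on whether $\A_1$ is run on $g_i$ or on $g_i/T$, the $T$ factor either sits in the loss scale (and thus in the constants hidden inside $R_N(\A_1)$) or cancels in the division by $T$. A second minor point is that $y^\star_i$ is the minimizer for outer iteration $i$ alone; this is legitimate only because $\A_2$ is restarted per outer round, which must be stated explicitly when applying the $\A_2$ regret bound.
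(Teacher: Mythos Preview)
Your proposal is correct and follows essentially the same argument as the paper: linearize each $f_t^i$ at $(x_i, y_t^i)$ via convexity, split into the $x$- and $y$-gradient sums (which are exactly $g_i$ and $h_t^i$), and invoke the regret guarantees of $\A_1$ and $\A_2$, with the paper making the identical remark that ``the functions $g_i$ are naturally scaled up by a factor of $T$'' to handle the normalization you flagged. Your explicit note that the per-$i$ comparator $y^\star_i$ is admissible precisely because $\A_2$ is restarted each outer round is a welcome clarification that the paper leaves implicit.
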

When using Online Gradient Descent as the base algorithm, the average regret scales as 
$ O\left(\frac{1}{\sqrt{N}} +  \frac{1}{\sqrt{T}}\right) $.
\begin{proof}[Proof of Theorem \ref{thm:simplenested}]
Let $x^{\star} \in \K_1$ be any point and let $y^{\star}_{1:T} \in \K_2$ be any sequence. We have
\begin{align*}
&\frac{\sum_{i=1}^{N}   \sum_{t=1}^{T}  f_{t}^{i}(x_i,y_t^{i}) -  f_{t}^{i}(x^\star,y^\star_i)}{TN} \\
\leq & \frac{\sum_{i=1}^{N}   \sum_{t=1}^{T} \nabla_x f_{t}^{i} (x_i - x^\star)}{TN}  \\
& \qquad\qquad + \frac{ \sum_{i=1}^{N}   \sum_{t=1}^{T} \nabla_y f_{t}^{i} (y_t^{i} - y^\star  )}{TN}\\
 =& \frac{\sum_{i=1}^{N} [ g_i (x_i)  - g_i(x^\star)]}{TN}  \\
 & \qquad\qquad + \frac{\sum_{i=1}^{N}   \sum_{t=1}^{T} [h_{t}^{i}(y_t) - h_{t}^{i}( y^\star_i  ) ]}{TN}  \\\leq & \frac{R_N(\A_1)}{N} + \frac{ \cdot R_{T}(\A_2)}{T}, 
\end{align*}
where the first inequality follows by convexity and the last inequality follows by the regret guarantees and noting that the functions $g_i$ are naturally scaled up by a factor of $T$. 
\end{proof}

\subsection{Proof Sketch for Theorem \ref{thm:main}}

The main idea behind the proof is to reduce to the setting of Theorem \ref{thm:simplenested}. In the reduction the $x$ variable corresponds to the open loop controls $u_{1:\horizon} \in \U$ and the variables $y_{t}^{i}$ correspond to the closed-loop disturbance-action policy $M_{t,1:\gpch}^{i} \in \M_{\Mdiam}$. The algorithms $\A_1$ and $\A_2$ are instantiated as Online Gradient Descent with appropriately chosen learning rates. 

We begin the reduction by using the observation in \cite{agarwal2019online} that costs are convex with respect to the variables $u,M$, for \textit{linear dynamical systems} with convex costs. With convexity, prima-facie the reduction seems immediate, however this is impeded by the counterfactual notion of policy regret which implies that cost at any time is dependent on previous actions. This nuance in the reduction from Theorem \ref{thm:simplenested} is only applicable to the closed loop policies $M$, the open loop part $u_{1:\horizon}$) on the other hand, follows according to the reduction and hence direct OGD is applied (Line 6, Algorithm \ref{alg:bigalg}).  

To resolve the issue of the counterfactual dependence, we use the techniques introduced in the OCO with memory framework proposed by \cite{anava2015online} and recently employed in the work of \cite{agarwal2019online}. We leverage the underlying stability of the dynamical system to ensure that cost at time $t$ depends only on a bounded number of previous rounds, say $\gpcl$. We then define a proxy loss denoted by $\mathrm{GPCLoss}$, corresponding to the cost incurred by a stationary closed-loop policy executing for the previous $\gpcl$ time steps. Formally, given a dynamical system $f_{1:\gpcl}$, perturbations $w_{1:\gpcl}$, a cost function $c$, a non-stationary open-loop policy $u_{1:\gpcl}$, $\mathrm{GPCLoss}$ is a function of closed-loop transfer $M_{1:L}$ defined as follows. Consider the following iterations with $y_1 = 0$,
\begin{align}
    a_j \defeq u_{j} &+ \sum_{r=1}^{\gpch} M_r w_{j-r}, \nonumber \\ y_{j} \defeq f_{j-1}  (y_{j-1},& a_{j-1}) + w_{j-1} \quad \forall j \in [1,\gpcl], \nonumber \\ 
    \mathrm{GPCLoss}(M_{1:\gpch},  u_{1:\gpcl}, &w_{-\gpch+1:\gpcl-1}, f_{1:\gpcl-1}, c) \defeq c(y_\gpcl, a_\gpcl). \label{eqn:gpcloss}
\end{align}
The algorithm updates by performing a gradient descent step on this loss, i.e. $M_{t+1,1:\gpch}^{i} = M_{t,1:\gpch}^{i} - \eta \nabla_{M} \mathrm{GPCLoss}(\cdot)$. The proof proceeds by showing that the actual cost and \text{its gradient} is closely tracked by their proxy GPC Loss counterparts with the difference proportional to the learning rate (Appendix Lemma \ref{lemma:gradcomparison}). Choosing the learning rate appropriately then completes the proof.


\section{Experiments}
\label{sec:experiments}
We demonstrate the efficacy of the proposed approach on two sets of experiments: the theory-aligned one performs basic checks on linear dynamical systems; the subsequent set demonstrates the benefit on highly non-linear systems  distilled from practical applications. In the following we provide a detailed description of the setup and the results are presented in Figure \ref{fig:quad_wind}.

\subsection{Experimental Setup}

We briefly review the methods that we compare to: The \textbf{ILQG} agent obtains a closed loop policy via the Iterative Linear Quadratic Gaussian algorithm \citep{todorov2005generalized}, proposed originally to handle Gaussian noise while planning on non-linear systems, on the simulator dynamics, and then executes the policy thus obtained. This approach does not \textit{learn} from multiple rollouts and, if the dynamics are fixed, provides a constant (across rollouts) baseline. 

The Iterative Learning Control (\textbf{ILC}) agent \citep{abbeel2006using} \textit{learns} from past trajectories to refine its actions on the next real-world rollout. We provide precise details in the Appendix (Section \ref{app:allalgoscomb}). Finally, the \textbf{IGPC} agent adapts Algorithm \ref{alg:bigalg} by replacing the policy update step (Line 5) with a LQR step on locally linearized dynamics.

In all our experiments, the metric we compare is the number of real-world rollouts required to achieve a certain loss value on the real dynamics. For further details on the setups and hyperparameter tuning please see Appendix (Section \ref{app:allalgoscomb}).

\subsection{Linear Control}
This section considers a discrete-time \textbf{Double Integrator} (detailed below), a basic kinematics model well studied in control theory. This linear system (described below) is subject to a variety of perturbations that vary either within or across episodes, 
\begin{align*}
    A = \begin{bmatrix}
    1 & 1 \\ 0 & 1
    \end{bmatrix}\quad  B = \begin{bmatrix} 0 \\ 1\end{bmatrix}.
\end{align*}

We pick three instructive perturbation models: First, as a sanity check, we consider constant offsets. While both ILC and IGPC adapt to this change, IGPC is quicker in doing so as evident by the cost on the first rollout itself. In the second, we treat constant offsets that gradually increase in magnitude from zero with rollouts/episodes. While gradual inter-episodic changes are well suited to ILC, IGPC still offers consistently better performance. The final scenario considers time-varying sinusoidal perturbations subject to rollout-varying phase shifts. In contrast to the former setups, such conditions make intra-episodic learning crucial for good performance. Indeed, IGPC outperforms alternatives here by a margin, reflecting the benefit of rollout-adaptive feedback policy in the regret bound.

\subsection{Non-linear Control with Approximate Models}
Here, we consider the task of controlling non-linear systems whose real-world characteristics are only partially known. In the cases presented below, the proposed algorithm \textbf{IGPC} either converges to the optimal cost with fewer rollouts (for Quadrotor), or, even disregarding speed of convergence, offers a better terminal solution quality (for Reacher). These effects are generally more pronounced in situations where the model mismatch is severe.

Concretely, consider the following setup: the agent is scored on the cost incurred on a handful of sequentially executed real-world rollouts on a dynamical system $g(x,u)$; all the while, the agent has access to an inaccurate simulator $f(x,u)\neq g(x,u)$. In particular, while limited to simply observing its trajectories in the real world $g$, the agent is permitted to compute the function value and Jacobian of the simulator $f(x,y)$ along arbitrary state-action pairs. The disturbances here are thus the difference between $g$ and $f$ along the state-action pairs visited along any given real world rollout. Here, we also consider a statistically-omnipotent \textit{infeasible agent} \textbf{ILQR (oracle)} that executes the Iterative Linear Quadratic Regulator algorithm \citep{li2004iterative} directly via Jacobians of the real world dynamics $g$ (a cheat), indicating a lower bound on the best possible cost.

\paragraph{Quadrotor with Wind}
The simulator models an underactuated planar quadrotor (6 dimensional state, 2 dimensional control) attempting to fly to $(1, 1)$ from the origin. The real-world dynamics differ from the simulator in the presence of a dispersive force field ($x \hat{\bf i}+y \hat{\bf j}$), to accomodate wind. The cost is measured as the distance sqaured from the origin along with a quadratic penalty on the actions.

\begin{figure*}[h!]
\centering
\includegraphics[width=\textwidth]{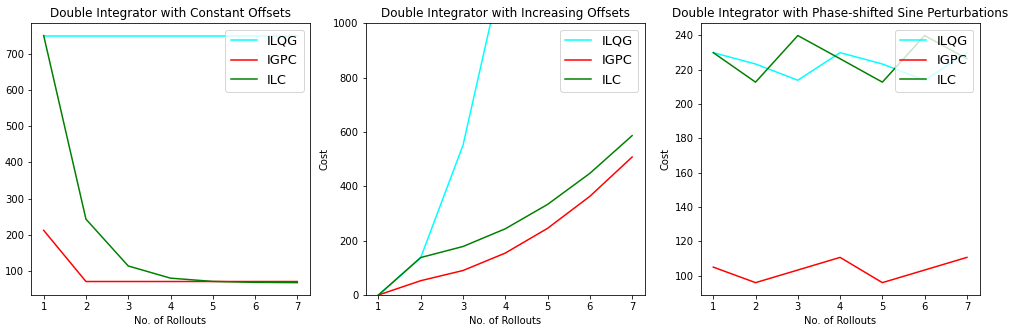}\\
\includegraphics[width=\textwidth]{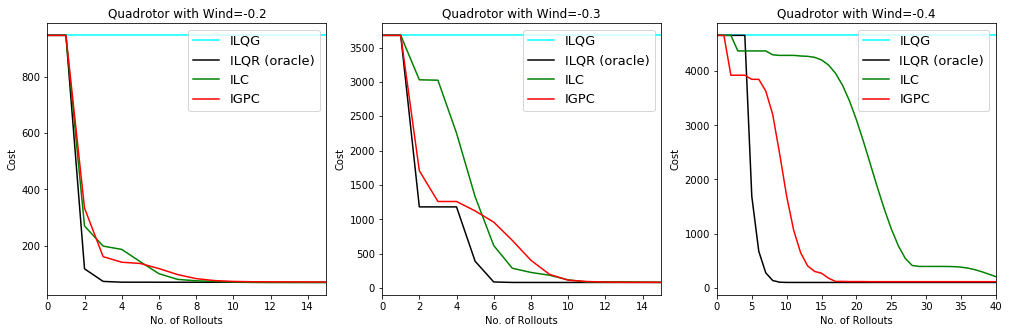}\\
\includegraphics[width=\textwidth]{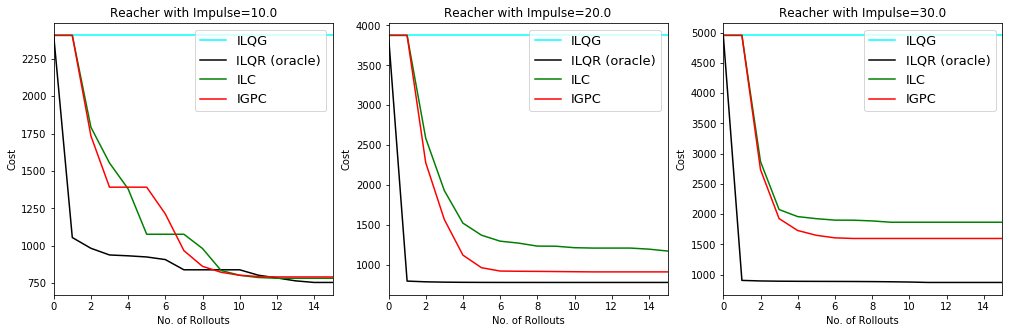}
\
\caption{On top is a linear system, Double Integrator, setup subject to: (L) constant offset, (M) offset that increments with rollout count, (R) phase-shifted sinusoidal perturbations. The middle section displays results on the quadrotor environment for varying magnitudes of wind. Bottom figure captures performance on the reacher environment with varying magnitudes of periodic impulses. \textbf{ILQR (oracle)} is an infeasible agent with access to Jacobians on the real world.}\label{fig:quad_wind}
\end{figure*}
\paragraph{Reacher with Impulse}
The simulator dynamics model a 2-DOF arm (6 dimensional state, 2 dimensional control) attempting to place its end-effector at a pre-specified goal. The true dynamics $g$ differs from the simulator in the application of periodic impulses to the center of mass of the arm links. The cost involves a quadratic penalty on the controls and the distance of the end effector from the goal.

In both  scenarios, JAX-based \cite{jax} differentiable implementations of the underlying dynamics were adapted from  \citep{gradu2021deluca}. 

\section{Conclusion}
In this work, we cast the task of disturbance-resilient planning into a regret minimization framework. We outline a gradient-based algorithm that refines an open loop plan in conjunction with a near instance-optimal closed loop policy. We provide a theoretical justification for the approach by proving a vanishing average regret bound. We also demonstrate our approach on simulated examples and observe empirical gains compared to the popular iterative learning control (ILC) approach. 

There are several exciting directions for future work. On the theoretical front, one interesting direction is to establish lower bounds on regret for the setting we consider here. This may provide an indication of how sub-optimal (in terms of regret) the approach we present here is and potentially guide the way towards improved algorithms. On the algorithmic front, extending our approach to handle partially observable settings would be of significant practical interest (e.g., settings where there is a mismatch in the sensor model in addition to the dynamics model). Finally, a particular exciting direction is to experimentally explore the benefits in terms of sim-to-real transfer conferred by our approach. 
\clearpage
\onecolumn

\bibliography{main}

\newcommand{\etalchar}[1]{$^{#1}$}
\begin{thebibliography}{DMM{\etalchar{+}}18}

\bibitem[ABH{\etalchar{+}}19]{agarwal2019online}
Naman Agarwal, Brian Bullins, Elad Hazan, Sham Kakade, and Karan Singh.
\newblock Online control with adversarial disturbances.
\newblock In {\em Proceedings of the 36th International Conference on Machine
  Learning}, pages 111--119, 2019.

\bibitem[ACM07]{ahn2007iterative}
Hyo-Sung Ahn, YangQuan Chen, and Kevin~L Moore.
\newblock Iterative learning control: Brief survey and categorization.
\newblock {\em IEEE Transactions on Systems, Man, and Cybernetics, Part C
  (Applications and Reviews)}, 37(6):1099--1121, 2007.

\bibitem[AHM15]{anava2015online}
Oren Anava, Elad Hazan, and Shie Mannor.
\newblock Online learning for adversaries with memory: price of past mistakes.
\newblock In {\em Advances in Neural Information Processing Systems}, pages
  784--792, 2015.

\bibitem[AHS19]{agarwal2019logarithmic}
Naman Agarwal, Elad Hazan, and Karan Singh.
\newblock Logarithmic regret for online control.
\newblock {\em arXiv preprint arXiv:1909.05062}, 2019.

\bibitem[AQN06]{abbeel2006using}
Pieter Abbeel, Morgan Quigley, and Andrew~Y Ng.
\newblock Using inaccurate models in reinforcement learning.
\newblock In {\em Proceedings of the 23rd international conference on Machine
  learning}, pages 1--8. ACM, 2006.

\bibitem[AYS11]{abbasi2011regret}
Yasin Abbasi-Yadkori and Csaba Szepesv{\'a}ri.
\newblock Regret bounds for the adaptive control of linear quadratic systems.
\newblock In {\em Proceedings of the 24th Annual Conference on Learning
  Theory}, pages 1--26, 2011.

\bibitem[Ber05]{bertsekas2005dynamic}
Dimitri Bertsekas.
\newblock {\em Dynamic programming and optimal control}, volume~1.
\newblock Athena scientific Belmont, MA, 2005.

\bibitem[BFH{\etalchar{+}}18]{jax}
James Bradbury, Roy Frostig, Peter Hawkins, Matthew~James Johnson, Chris Leary,
  Dougal Maclaurin, George Necula, Adam Paszke, Jake Vander{P}las, Skye
  Wanderman-{M}ilne, and Qiao Zhang.
\newblock {JAX}: composable transformations of {P}ython+{N}um{P}y programs,
  2018.

\bibitem[BKT19]{balcan2019provable}
Maria-Florina Balcan, Mikhail Khodak, and Ameet Talwalkar.
\newblock Provable guarantees for gradient-based meta-learning.
\newblock In {\em International Conference on Machine Learning}, pages
  424--433. PMLR, 2019.

\bibitem[BM99]{bemporad1999robust}
Alberto Bemporad and Manfred Morari.
\newblock Robust model predictive control: A survey.
\newblock In {\em Robustness in identification and control}, pages 207--226.
  Springer, 1999.

\bibitem[CHK{\etalchar{+}}18]{cohen2018online}
Alon Cohen, Avinatan Hasidim, Tomer Koren, Nevena Lazic, Yishay Mansour, and
  Kunal Talwar.
\newblock Online linear quadratic control.
\newblock In {\em International Conference on Machine Learning}, pages
  1028--1037, 2018.

\bibitem[DMM{\etalchar{+}}18]{dean2018regret}
Sarah Dean, Horia Mania, Nikolai Matni, Benjamin Recht, and Stephen Tu.
\newblock Regret bounds for robust adaptive control of the linear quadratic
  regulator.
\newblock In {\em Advances in Neural Information Processing Systems}, pages
  4188--4197, 2018.

\bibitem[dR96]{de1996synthesis}
Dick de~Roover.
\newblock Synthesis of a robust iterative learning controller using an
  h/sub/spl infin//approach.
\newblock In {\em Proceedings of 35th IEEE Conference on Decision and Control},
  volume~3, pages 3044--3049. IEEE, 1996.

\bibitem[FAL17]{finn2017model}
Chelsea Finn, Pieter Abbeel, and Sergey Levine.
\newblock Model-agnostic meta-learning for fast adaptation of deep networks.
\newblock {\em arXiv preprint arXiv:1703.03400}, 2017.

\bibitem[GHS{\etalchar{+}}21]{gradu2021deluca}
Paula Gradu, John Hallman, Daniel Suo, Alex Yu, Naman Agarwal, Udaya Ghai,
  Karan Singh, Cyril Zhang, Anirudha Majumdar, and Elad Hazan.
\newblock Deluca--a differentiable control library: Environments, methods, and
  benchmarking.
\newblock {\em arXiv preprint arXiv:2102.09968}, 2021.

\bibitem[Haz16]{OCObook}
Elad Hazan.
\newblock Introduction to online convex optimization.
\newblock {\em Foundations and Trends in Optimization}, 2(3-4):157--325, 2016.

\bibitem[HKS20]{hazan2019nonstochastic}
Elad Hazan, Sham Kakade, and Karan Singh.
\newblock The nonstochastic control problem.
\newblock In Aryeh Kontorovich and Gergely Neu, editors, {\em Proceedings of
  the 31st International Conference on Algorithmic Learning Theory}, volume 117
  of {\em Proceedings of Machine Learning Research}, pages 408--421, San Diego,
  California, USA, 08 Feb--11 Feb 2020. PMLR.

\bibitem[HWMZ20]{hewing2020learning}
Lukas Hewing, Kim~P Wabersich, Marcel Menner, and Melanie~N Zeilinger.
\newblock Learning-based model predictive control: Toward safe learning in
  control.
\newblock {\em Annual Review of Control, Robotics, and Autonomous Systems},
  3:269--296, 2020.

\bibitem[LCRM04]{langson2004robust}
Wilbur Langson, Ioannis Chryssochoos, SV~Rakovi{\'c}, and David~Q Mayne.
\newblock Robust model predictive control using tubes.
\newblock {\em Automatica}, 40(1):125--133, 2004.

\bibitem[LT04]{li2004iterative}
Weiwei Li and Emanuel Todorov.
\newblock Iterative linear quadratic regulator design for nonlinear biological
  movement systems.
\newblock In {\em ICINCO (1)}, pages 222--229, 2004.

\bibitem[May14]{mayne2014model}
David~Q Mayne.
\newblock Model predictive control: Recent developments and future promise.
\newblock {\em Automatica}, 50(12):2967--2986, 2014.

\bibitem[Moo12]{moore2012iterative}
Kevin~L Moore.
\newblock {\em Iterative learning control for deterministic systems}.
\newblock Springer Science \& Business Media, 2012.

\bibitem[MSR05]{mayne2005robust}
David~Q Mayne, Mar{\'\i}a~M Seron, and SV~Rakovi{\'c}.
\newblock Robust model predictive control of constrained linear systems with
  bounded disturbances.
\newblock {\em Automatica}, 41(2):219--224, 2005.

\bibitem[MTR19]{mania2019certainty}
Horia Mania, Stephen Tu, and Benjamin Recht.
\newblock Certainty equivalent control of lqr is efficient.
\newblock {\em arXiv preprint arXiv:1902.07826}, 2019.

\bibitem[OH05]{owens2005iterative}
David~H Owens and Jari H{\"a}t{\"o}nen.
\newblock Iterative learning control—an optimization paradigm.
\newblock {\em Annual reviews in control}, 29(1):57--70, 2005.

\bibitem[PBGM62]{pontryagin1962mathematical}
Lev~S Pontryagin, VG~Boltyanskii, RV~Gamkrelidze, and EF~Mishchenko.
\newblock The mathematical theory of optimal processes, translated by kn
  trirogoff.
\newblock {\em New York}, 1962.

\bibitem[RB17]{rosolia2017learning}
Ugo Rosolia and Francesco Borrelli.
\newblock Learning model predictive control for iterative tasks. a data-driven
  control framework.
\newblock {\em IEEE Transactions on Automatic Control}, 63(7):1883--1896, 2017.

\bibitem[Ros15]{ross2015primer}
I~Michael Ross.
\newblock {\em A primer on Pontryagin's principle in optimal control}.
\newblock Collegiate publishers, 2015.

\bibitem[SB18]{sutton2018reinforcement}
Richard~S Sutton and Andrew~G Barto.
\newblock {\em Reinforcement learning: An introduction}.
\newblock MIT press, 2018.

\bibitem[Sim20]{simchowitz2020making}
Max Simchowitz.
\newblock Making non-stochastic control (almost) as easy as stochastic.
\newblock {\em arXiv preprint arXiv:2006.05910}, 2020.

\bibitem[SSH20]{simchowitz2020improper}
Max Simchowitz, Karan Singh, and Elad Hazan.
\newblock Improper learning for non-stochastic control, 2020.

\bibitem[Ste94]{stengel1994optimal}
Robert~F Stengel.
\newblock {\em Optimal control and estimation}.
\newblock Courier Corporation, 1994.

\bibitem[Ted20]{tedrake}
Russ Tedrake.
\newblock {\em Underactuated Robotics: Algorithms for Walking, Running,
  Swimming, Flying, and Manipulation (Course Notes for MIT 6.832)}.
\newblock 2020.

\bibitem[TL05]{todorov2005generalized}
Emanuel Todorov and Weiwei Li.
\newblock A generalized iterative lqg method for locally-optimal feedback
  control of constrained nonlinear stochastic systems.
\newblock In {\em Proceedings of the 2005, American Control Conference, 2005.},
  pages 300--306. IEEE, 2005.

\bibitem[WCSB19]{wagener2019online}
Nolan Wagener, Ching-An Cheng, Jacob Sacks, and Byron Boots.
\newblock An online learning approach to model predictive control.
\newblock {\em arXiv preprint arXiv:1902.08967}, 2019.

\bibitem[ZD98]{zhou1998essentials}
Kemin Zhou and John~Comstock Doyle.
\newblock {\em Essentials of robust control}, volume 104.
\newblock Prentice hall Upper Saddle River, NJ, 1998.

\bibitem[ZDG96]{kemin}
Kemin Zhou, John~C. Doyle, and Keith Glover.
\newblock {\em Robust and Optimal Control}.
\newblock Prentice-Hall, Inc., USA, 1996.

\bibitem[Zin03]{zinkevich2003online}
Martin Zinkevich.
\newblock Online convex programming and generalized infinitesimal gradient
  ascent.
\newblock In {\em Proceedings of the 20th International Conference on Machine
  Learning (ICML-03)}, pages 928--936, 2003.

\end{thebibliography}
\bibliographystyle{alpha}

\newpage
\appendix

\section{Relationship with Meta-Learning}
\label{sec:metacomparison}
In this section, we detail how the nested-OCO formulation proposed in the paper can be used to derive upto a small constant factor, the gradient based meta-learning results presented in \cite{balcan2019provable} by reducing their setting to the nested-OCO setting and applying Algorithm \ref{alg:nestedalg}. The reduction requires setting the $x,y$ space, i.e. $\K_1, \K_2$ to be $\Theta\subseteq \R^d$ and a ball in $\R^d$ of diameter $D^*$ (according to the notation in \cite{balcan2019provable}). Further, we set the function $f_{t}^i(x,y) \defeq l_{t,i}(x+y)$. The reduction recovers the same guarantee as the result in \cite{balcan2019provable} upto a factor of 2. 

We note that  \cite{balcan2019provable} provide an algorithm that works without the knowledge of $D^*$, but such an extension is standard in OCO literature and can be handled similarly to \cite{balcan2019provable}. Further we acknowledge that for the particular problem considered in \cite{balcan2019provable}, the constant factor is important as  a straightforward algorithm also achieves the same rate if constant factors are ignored, a fact highlighted in the original paper. On the other hand, our formulation allows for a stronger comparator even in the \cite{balcan2019provable} setup. 

We would like to highlight that our nested-OCO setup allowing for different $x,y$ spaces is more general than the setup typically considered in initialization-based meta-learning. Owing to this generality, the algorithm we provide naturally performs a gradient step on the true function value for the outer loop as opposed to a distance based function as in \cite{balcan2019provable}. Further exploring the effectiveness of our algorithm for meta-learning is left as interesting future work.

\section{Comparison of Policy Classes}\label{lincompare}
In this section we make a comparison of various policy classes introduced in the paper. 

\paragraph{Linear state-action policies.} 
In classical optimal control with full observation, the cost function is typically assumed to be quadratic in the state and control, i.e. 
$$ c_t(x,u) = x^\top Q x + u^\top R u .$$ 
Under this assumption and infinite horizon time-invariant $(A_i,B_i=A_j,B_j)$ linear dynamical system (LDS), and assuming independent Gaussian disturbances at every time step, the optimal solution can be computed using the Bellman optimality equations (see e.g. \cite{tedrake}). This gives rise to the Discrete time Algebraic Riccati Equation (DARE), whose solution is a linear policy commonly denoted by 
$$ u_t = K x_t . $$
The finite-horizon solution is also computable and results in a non-stationary linear policy, where the linear policies converge exponentially fast to the first solution of the Riccati equation. It is thus reasonable to consider the class of all linear policies as a reasonable comparator class. Denote the class of all linear policies as
$$ \Pi_L = \{ K \in \reals^{d_x \times d_u } \} . $$

\paragraph{State of the art: linear dynamical control policies. }

A generalization of static state-action control policies is that of linear dynamical controllers (LDC). LDC are particularly useful for partially observed LDS and maintain their own internal dynamical system according to the observations in order to recover the hidden state of the system.  A formal definition is below. 

\begin{definition}[Linear Dynamic Controllers] \label{def:ldc}
A linear dynamic controller $\pi$ is a linear dynamical system $(A_\pi, B_\pi, C_\pi, D_\pi)$ with internal state $s_t\in \mathbb{R}^{d_\pi}$, input $x_t\in \mathbb{R}^{d_x}$ and output $u_t\in\mathbb{R}^{d_u}$ that satisfies
$$
s_{t+1} = A_\pi s_t + B_\pi x_t,\ \ u_t = C_\pi s_t + D_\pi x_t.
$$
\end{definition}
LDC are state-of-the-art in terms of performance and prevalence in control applications involving LDS, both in the full and partial observation settings. They are known to be theoretically optimal for partially observed LDS with quadratic cost functions and normally distributed noise, but are more widely used. Denote the class of all LDC as
$$ \Pi_{LDC} = \{ A \in \reals^{d_s \times d_s} ,B \in \reals^{d_s \times d_x}, C \in \reals ^{d_u \times d_s} ,D \in \reals^{d_u \times d_x } \} . $$

\paragraph{Disturbance-Action Controllers (DAC)}

As we have defined earlier, we consider an even
more general class of policies, i.e. that of disturbance-action control. For linear time invariant systems, this policy class is more general than that of LDC and linear controllers, in the sense that for every LDS there exists a DAC which outputs exactly the same controls on the same system and sequence of noises. With a finite and fixed $H$, an approximate version of this statement is true.
The precise approximation statement and formal proof can be found in \cite{agarwal2019online}. A similar statement can be made for LDC as well. 

However we note that all of the above statements hold only in linear time invariant case. In the time varying case, these generalizations are not necessarily true, however note that we are using disturbance action feedback control only as an adaptive control policy to correct against noise, and it is added upon an open-loop plan.

\section{Main Theorem and Proof}
\label{app:proof}

We provide the following restatement of Theorem \ref{thm:main} with details regarding the parameters and the dependence on the system parameters. To state the results concisely, we assume that all the appropriate assumed constants, i.e. $\lindiam,\Mdiam,G,\beta,\udiam,\wdiam$ are greater than 1. This is done to upper bound the sum of two constants by twice their product. All the results hold by replacing any of these constants by the max of the constant and 1. 

\begin{theorem}
\label{thm:mainrestated}
Let $\mathcal{U} \subseteq \reals^{d_u}$ be a bounded convex set with diameter $\udiam$. Consider the online planning game(Definition \ref{def:onlineplanning}) with linear dynamical systems $\{ AB_{1:\horizon}^{i} \}_{i=1}^{N}$ satisfying Assumption \ref{ass:linsystem} and cost functions $\{c_{1:\horizon}\}_{i=1}^N$ satisfying Assumption \ref{ass:cost}. Then we have that Algorithm \ref{alg:bigalg}(when executed with appropriate parameters), for any sequence of disturbances $\{w^i_{1:\horizon}\}_{i=1}^{N}$ with each $\|w^{i}_t\| \leq \wdiam$ and any $\Mdiam \geq 0$, produces a sequence of actions with planning regret bounded as
\[\frac{1}{N} \left(\sum_{i=1}^{N} J_i(\pi^i_{1:\horizon}) - \min_{u_{1: \horizon} \in \U} \left(\sum_{i=1}^{N}  \min_{M_{1:\gpch}\in \M_{\Mdiam}} J_i\left(\pi_{M_{1:\gpch}}(u_{1:\horizon})\right)\right)\right)  \leq \left(\frac{c_{\mathrm{in}}\log^2(T)}{\sqrt{T}} + \frac{c_{\mathrm{out}}}{\sqrt{N}}\right).\]
where $\M_{\Mdiam} = \{M | M \in \reals^{d_u, d_x}, \|M\| \leq \Mdiam\}$and $c_{\mathrm{in}},c_{\mathrm{out}}$ are constants depending on system parameters as follows 

\[ c_{\mathrm{out}} = \tilde{O}\left(G\udiam(\udiam +  \Mdiam\gpch \wdiam)\kappa^{2}\delta^{-2}\right)\]
\[ c_{\mathrm{in}} = \tilde{O}\left(\sqrt{ \Mdiam^3 \lindiam^4 \delta^{-3} \beta G^2 \gpch^5 \wdiam^3(\udiam + \Mdiam \gpch \wdiam)^2}\right).\]
Here $\tilde{O}$ subsumes constant factors and factors poly-logarithimic in the the arguments of $\tilde{O}$.
To achieve the above bound, 
Algorithm \ref{alg:bigalg} is to be executed with parameters, learning rate $\eta_{\mathrm{out}} = \frac{\udiam}{G\lindiam\delta^{-2}(\lindiam\udiam  +  \lindiam\Mdiam \gpch \wdiam + \wdiam) \sqrt{N}} $, with the inner execution of Algorithm \ref{alg:gpcoverlay} is performed with parameters $\eta_{\mathrm{in}} = \frac{\gamma^2 \gpch^2}{\sqrt{ 12\Mdiam \lindiam^4 \delta^{-5} \beta G^2 \gpch^3 \wdiam^3(\udiam + \Mdiam \gpch \wdiam)^2}}$ and $\gpcl = \delta^{-1}\log(\eta_{\mathrm{in}})$. 
\end{theorem}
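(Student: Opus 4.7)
The plan is to reduce the planning regret analysis to the abstract nested-OCO framework of Theorem~\ref{thm:simplenested}, with $\K_1 = \U$ (carrying the outer variable $u_{1:\horizon}^i$, chosen once per rollout) and $\K_2 = \M_\Mdiam$ (carrying the inner variable $M_{t,1:\gpch}^i$, updated within a rollout). Both inner and outer algorithms will be instantiated as Online Gradient Descent at learning rates $\eta_{\mathrm{in}}$ and $\eta_{\mathrm{out}}$ respectively, yielding, in the OCO regret bound, a combined $\tilde O(1/\sqrt{N} + 1/\sqrt{T})$ rate once the gradient norms and diameters are plugged in.

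The first ingredient is convexity. For a $(\kappa,\delta)$-strongly-stable linear system, the state $x_t$ is a linear map of the concatenated $(u_{1:t-1}, M_{1:t-1,1:\gpch})$ with coefficients determined by $A_{1:t-1}, B_{1:t-1}$ and the realized disturbances; the action $a_t$ is likewise linear. Composing with the convex, $\beta$-smooth and Lipschitz cost $c_t$ (Assumption~\ref{ass:cost}) shows that the rollout cost $J_i$ is jointly convex in $(u, M)$, so linearizing as in Theorem~\ref{thm:simplenested} does not lose anything. This is the now-standard consequence of the disturbance-action parametrization \cite{agarwal2019online}.

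The real obstacle is that the inner problem is not a plain OCO problem but an OCO-with-memory problem: the cost at step $t$ actually depends on all past $M_{1:t-1,1:\gpch}$ played in the same rollout, not just on $M_{t,1:\gpch}$, so directly applying $\A_2$'s regret bound against a stationary $M^\star$ is illegal. To handle this I follow the Anava--Hazan style reduction used in \cite{agarwal2019online}: truncate the memory to a window $\gpcl = \tilde O(\delta^{-1}\log T)$, which, by strong stability, costs only $\mathrm{poly}(\kappa,\Mdiam,\wdiam)\cdot(1-\delta)^{\gpcl} = O(1/T)$ per step in value; then define $\mathrm{GPCLoss}$ as in~\eqref{eqn:gpcloss}, which is a function purely of the \emph{current} $M$ by pretending it has been used for the last $\gpcl$ steps. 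The key technical step is a comparison lemma (the promised Lemma~\ref{lemma:gradcomparison}) showing that (i) $|c_t(x_t,a_t) - \mathrm{GPCLoss}(M_{t,1:\gpch};\cdot)| = \tilde O(\eta_{\mathrm{in}} \gpcl^2 D_{\mathrm{sys}})$ because the $M_{s,1:\gpch}$ for $s\in[t-\gpcl,t)$ differ from $M_{t,1:\gpch}$ by at most $\gpcl \eta_{\mathrm{in}} \|\nabla\|$, and (ii) the same holds for the gradient with respect to $M$ and (via the outer update rule) with respect to $u$. This lets me swap the true loss by $\mathrm{GPCLoss}$ inside the Theorem~\ref{thm:simplenested} bound at an additive cost proportional to $\eta_{\mathrm{in}}$.

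With these pieces in hand the proof assembles as follows. First, bound all relevant quantities: action norms by $\udiam + \Mdiam\gpch\wdiam$, state norms via the strong-stability recursion by $\kappa\delta^{-1}(\udiam + \Mdiam\gpch\wdiam + \wdiam)$, and gradient norms of $c_t$ via Lipschitzness and smoothness. Second, apply the Theorem~\ref{thm:simplenested} template with $\A_1, \A_2$ being OGD, but with the inner loss replaced by $\mathrm{GPCLoss}$, obtaining
$$\frac{1}{N}\sum_{i=1}^N\!\Big[J_i(\pi^i) - \min_{M^\star\in\M_\Mdiam} J_i(\pi_{M^\star}(u^i))\Big] \le \frac{R_T(\A_2)}{T} + O(\eta_{\mathrm{in}} \gpcl^2 D_{\mathrm{sys}}),$$
and the outer term $R_N(\A_1)/N$ controls the $u$-regret against $u^\star$. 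Third, tune $\eta_{\mathrm{in}},\eta_{\mathrm{out}}$ to balance the OGD regret $O(D/(\eta \sqrt{T}))$ against the memory slack $O(\eta T^{?})$, yielding the $\eta_{\mathrm{in}}, \eta_{\mathrm{out}}$ stated in the theorem and producing the $\tilde O(\log^2 T/\sqrt{T} + 1/\sqrt{N})$ bound with the advertised constants $c_{\mathrm{in}}, c_{\mathrm{out}}$ coming from the state/action/gradient bounds above. The main technical hurdle will be a clean statement and proof of the gradient-tracking lemma (ii), since it requires simultaneously controlling drift of both the $u^i$ across rollouts and the $M_{t,1:\gpch}^i$ across inner steps, and accounting for the fact that the outer gradient is computed along the \emph{realized} GPC-augmented trajectory rather than along an idealized stationary one.
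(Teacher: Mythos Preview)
Your proposal is correct and follows essentially the same approach as the paper: reduce to the nested-OCO template via joint convexity (linearizing the regret into a $\nabla_u$ term and a $\nabla_M$ term), then handle the inner OCO-with-memory problem by the surrogate $\mathrm{GPCLoss}$ and a gradient-comparison lemma (the paper's Lemma~\ref{lemma:gradcomparison}), with the memory slack proportional to $\eta_{\mathrm{in}}$. Your closing concern is unfounded: the outer regret is handled directly by the gradient bound (Lemma~\ref{lemma:gradbound}) on the realized GPC-augmented trajectory---no control of $u^i$-drift across rollouts is needed, since OGD on linear losses handles arbitrary bounded gradient sequences and the joint-convexity linearization already separates the $u$ and $M$ contributions cleanly.
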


\subsection{Requisite Definitions}

Before proving the theorem we set up some useful definitions. Fix a linear dynamical system $AB_{1:\horizon}$ and a disturbance sequence $w_{1:\horizon}$. For any sequence $u_{1:\horizon} \in \U$ and $M_{1:\horizon, 1:\gpch} \in \M_{\Mdiam}$, we define $\horizon$ functions $x_{1:\horizon}(\cdot | AB_{1:\horizon},w_{1:\horizon}),a_{1:\horizon}(\cdot | AB_{1:\horizon},w_{1:\horizon})$, denoting the action played and the state visited at time $t$ upon execution of the policies together. Herein we drop $AB_{1:\horizon},w_{1:\horizon}$ from the notation when clear from the context. Formally, consider the following definitions for all $t$,  

\begin{equation}
	\label{eqn:adef}
	a_t(u_{1:\horizon}, M_{1:\horizon, 1:\gpch}) \defeq u_{t} + \sum_{r=1}^{\gpch} M_{t,r} w_{t-r}
\end{equation}
\begin{equation}
\label{eqn:xdef}
	x_1(u_{1:\horizon}, M_{1:\horizon, 1:\gpch}) \defeq 0 \qquad x_{t+1}(u_{1:\horizon}, M_{1:\horizon, 1:\gpch}) \defeq  A_t x_{t}(u_{1:\horizon}, M_{1:\horizon, 1:\gpch}) + B_t a_t + w_t
\end{equation}
Given a sequence of cost functions $c_{1:\horizon}(x, u): \reals^{d_{x} \times d_{u}} \rightarrow \reals$, satisfying Assumption \ref{ass:cost}, define via an overload of notation, the cost functions $c_t$ as a function of $u_{1:\horizon}, M_{1:\horizon,1:\gpch}$ as follows
\begin{equation}
\label{eqn:cdef}
    \forall t \in [1:\horizon], \qquad c_t(u_{1:t},M_{1:\horizon, 1:\gpch}) = c_t(x_t(u_{1:t},M_{1:\horizon, 1:\gpch}), a_t(u_{1:\horizon}, M_{1:\horizon, 1:\gpch}))
\end{equation}
Naturally, according to our definition of the total cost $J$ of the rollout we get that 
\[J(u_{1:\textbf{}\horizon}, M_{1:\horizon,1:\gpch}) = \frac{1}{\horizon}\sum_{t=1}^{\horizon} c_t(u_{1:t},M_{1:\horizon, 1:\gpch})\]
Next, we expand upon the recursive definition of $x_t(\cdot, \cdot)$ via the following operators,
\begin{definition} 
\label{def:linops}
Given a linear dynamical system $AB_{1:\horizon}$, define the following transfer matrices 
\[ \forall j \in [\horizon],\forall k \in [j+1,\horizon]\quad  T_{j \rightarrow k} \in \reals^{d_x \times d_u} \qquad T_{j \rightarrow k} \defeq \begin{cases}
I & \text{if $k=j+1$} \\
\left(\Pi_{t=j+2}^{k} A_t\right) & \text{otherwise} 
\end{cases}\]
Additionally given a disturbance sequence $w_{1:\horizon}$, define the following linear operator over matrix sequences $M_{1:\horizon, 1:\gpch}$ 
\[ \forall j \in [\horizon], \forall k \in [j+1,\horizon] \quad \psi^M_{j \rightarrow k}: [\reals^{d_u \times d_x}]^{\horizon \times \gpch} \rightarrow \reals^{d_x}\]
\[\psi^M_{j \rightarrow k}(M_{1:\horizon, 1:\gpch}) = \sum_{t=j}^{k-1} \left( T_{t \rightarrow k}B_t \left(\sum_{r=1}^{\gpch} M_{t,r} w_{k-r} \right)\right)\]
\end{definition}
It can be observed via unrolling the recursion and the definitions above that 
\begin{equation}
\label{eqn:xexpansion}
	x_t(u_{1:\horizon},M_{1:\horizon,1:\gpch}) = \sum_{j=1}^{t-1} T_{j \rightarrow t} (B_ju_j + w_j) + \psi_{1\rightarrow t}^{M}(M_{1:\horizon, 1:\gpch}).
\end{equation}
Since $x_t,a_t$ are linear functions of $u_{1:\horizon},M_{1:\horizon,1:\gpch}$, therefore we have that $c_t(u_{1:\horizon},M_{1:\horizon,1:\gpch})$ is a convex function of its arguments. The next lemma further shows that the gradient of the total cost with respect to the argument $u_{1:T}$ is bounded, as stated in the following lemma.
\begin{lemma}
\label{lemma:gradbound}
Given a linear system $AB_{1:\horizon}$ satisfying Assumption \ref{ass:linsystem}, a bounded disturbance sequence $w_{1:\horizon}$ and a cost sequence $c_t$ satisfying Assumption \ref{ass:cost}, then for any $\Mdiam \geq 0, \U$, let $u_{1:\horizon} \in \U, M_{1:\horizon, 1:\gpch} \in \M_{\Mdiam}$ be two sequences, then we have that
\[ \left\|\nabla_{u_j}\left( \sum_{t=1}^{\horizon}  c_t(u_{1:\horizon}, M_{1:\horizon,1:\gpch}) \right)\right\| \leq 2G\lindiam\delta^{-2}(\lindiam\udiam  +  \lindiam\Mdiam \gpch \wdiam + \wdiam) \]
\end{lemma}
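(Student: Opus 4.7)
The plan is a direct chain-rule computation on the closed-form expansion in \eqref{eqn:xexpansion}, together with the geometric decay granted by $(\lindiam,\delta)$-strong stability (Assumption \ref{ass:linsystem}) to keep the sum over future time steps finite, and the Lipschitz bound from Assumption \ref{ass:cost}, which requires controlling $\|x_t\|$ and $\|a_t\|$ first.

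First I would bound the reachable state and action sizes. From \eqref{eqn:adef} and the hypothesis $u_{t}\in\U$, $\|M_{t,r}\|\le \Mdiam$, $\|w_t\|\le \wdiam$, the triangle inequality gives $\|a_t\|\le \udiam + \gpch\Mdiam\wdiam$. For $x_t$, I use \eqref{eqn:xexpansion}: strong stability gives $\|T_{j\to t}\|\le (1-\delta)^{t-j-1}$, and the operator $\psi^M_{1\to t}$ has a double sum over $t$ and $r$ which collapses to a geometric series in $(1-\delta)$. Summing yields a uniform diameter
\[D \;\defeq\; \delta^{-1}\bigl(\lindiam\udiam + \lindiam\gpch\Mdiam\wdiam + \wdiam\bigr)\]
with $\|x_t\|,\|a_t\|\le D$. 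Applying the Lipschitz clause of Assumption \ref{ass:cost} then gives $\|\nabla_x c_t\|,\|\nabla_u c_t\|\le GD$ at every visited point.

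Next I would apply the chain rule to $\nabla_{u_j}\sum_{t=1}^{\horizon} c_t(u_{1:\horizon},M_{1:\horizon,1:\gpch})$. From \eqref{eqn:xexpansion} and \eqref{eqn:adef}, $\partial x_t/\partial u_j = T_{j\to t}B_j$ for $t>j$ and zero otherwise, while $\partial a_t/\partial u_j = I$ if $t=j$ and zero otherwise. Thus
\[\nabla_{u_j}\sum_{t=1}^{\horizon} c_t \;=\; \nabla_u c_j \;+\; \sum_{t=j+1}^{\horizon} B_j^{\tr}\,T_{j\to t}^{\tr}\,\nabla_x c_t.\]
Bounding the summation by $\|B_j\|\le \lindiam$, $\|T_{j\to t}\|\le(1-\delta)^{t-j-1}$, and $\|\nabla_x c_t\|\le GD$ gives a geometric series whose sum is at most $GD\lindiam/\delta$. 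Adding the $\|\nabla_u c_j\|\le GD$ term and absorbing $1$ into $\lindiam\delta^{-1}\ge 1$ yields the stated bound $2G\lindiam\delta^{-2}(\lindiam\udiam + \lindiam\Mdiam\gpch\wdiam + \wdiam)$.

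There is no real obstacle: every step is routine given \eqref{eqn:xexpansion}, Definition \ref{def:linops}, and the two assumptions. The only mild care needed is to ensure the $\psi^M$ contribution to $\|x_t\|$ is bounded by the correct geometric sum (the indices $t$ in the definition of $\psi^M_{1\to t}$ run over both the transfer and the $r$-convolution), which is why the final bound carries a factor of $\delta^{-2}$ rather than $\delta^{-1}$: one $\delta^{-1}$ comes from propagating $u_j$ to $x_t$ through $T_{j\to t}$, and the other from the geometric envelope of $\|x_t\|$ itself when applying Lipschitzness.
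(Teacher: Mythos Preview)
Your proposal is correct and follows essentially the same route as the paper: bound $\|x_t\|,\|a_t\|$ via the geometric decay of $T_{j\to t}$ (the paper packages this into Lemma~\ref{lem:boundedstates}), invoke the Lipschitz clause of Assumption~\ref{ass:cost} to control $\|\nabla_x c_t\|,\|\nabla_u c_t\|$, then apply the chain rule with $\partial x_t/\partial u_j = T_{j\to t}B_j$ and sum the resulting geometric series. The only cosmetic difference is that the paper states the state/action and Jacobian bounds as a separate lemma before the gradient computation, whereas you do everything inline.
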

We provide the proof of the lemma further in the section. Using the lemma we are now ready to prove Theorem \ref{thm:main}. 
\begin{proof}[Proof of Theorem \ref{thm:main}]
Lets fix a particular rollout $i$. Let $AB_{1:\horizon}^{i}$ be the dynamical system and $w^{i}_{1:\horizon}$ be the disturbance supplied. Further  $u_{1:\horizon}^{i}$ be the open loop control sequence played at round $i$ and $M_{1:\horizon, 1:\gpch}^{i}$ be the disturbance feedback sequence played by the GPC subroutine. By definition we have that the state achieved 
\[ x^{i}_t = x_t(u_{1:\horizon}^{i}, M_{1:\horizon, 1:\gpch}^{i}) \qquad a^{i}_t = a_t(u_{1:\horizon}^{i}, M_{1:\horizon, 1:\gpch}^{i})\]
We have for convenience dropped the system and disturbance from our notation. The total cost at round $i$ incurred by the algorithm by definition is 
\[ J = \sum_{i=1}^{N} \frac{1}{\horizon}\left(\sum_{t=1}^{\horizon} c_t^{i}(u_{1:\horizon}^{i}, M^i_{1:\horizon,1:\gpch})\right)\]
Fix the sequence of comparators to be $\ustar_{1:\horizon}, \{\Mstar_{1:\gpch}^{i}\}_{i=1}^{N}$. The comparator cost by definition then is
\[ \accentset{\ast}{J} = \sum_{i=1}^{N} \frac{1}{\horizon}\left(\sum_{t=1}^{\horizon} c_t^{i}(\ustar_{1:\horizon}, \mathcal{T}_{\horizon}\Mstar^i_{1:\gpch})\right),\]
where given a sequence $v_{a:b}$, we define the tiling operator $\mathcal{T}_k$, which creates a nested sequence of outer length $k$ by tiling with copies of the sequence $v_{a:b}$, i.e. $\mathcal{T}_kv_{a:b}=[v_{a:b},v_{a:b} \ldots v_{a:b}]$. 
We therefore have the following calculation for the regret which follows from the convexity of the cost function $c_t$ with respect to $u,M$ as established before,
\begin{align*}
    &\sum_{i=1}^{N} \sum_{t=1}^{\horizon} \left( c_t^{i}(u_{1:\horizon}^{i}, M^i_{1:\horizon,1:\gpch}) - c_t^{i}(\ustar_{1:\horizon}, \mathcal{T}_{\horizon}\Mstar^i_{1:\gpch})\right) \\
    &\leq \sum_{i=1}^{N} \sum_{t=1}^{\horizon} \left( \nabla_u c_t^{i}(u_{1:\horizon}^{i}, M^i_{1:\horizon,1:\gpch}) (u_{1:\horizon}^{i} - \ustar_{1:\horizon}) + \nabla_M c_t^{i}(u_{1:\horizon}^{i}, M^i_{1:\horizon,1:\gpch}) (M^i_{1:\horizon,1:\gpch} - \Mstar^i_{1:\gpch}) \right) \\
    &= \underbrace{\sum_{i=1}^{N} \sum_{t=1}^{\horizon} \left( \nabla_u c_t^{i}(u_{1:\horizon}^{i}, M^i_{1:\horizon,1:\gpch}) (u_{1:\horizon}^{i} - \ustar_{1:\horizon}) \right)}_{\text{Outer Regret}} + \underbrace{\sum_{i=1}^{N} \sum_{t=1}^{\horizon} \left( \nabla_M c_t^{i}(u_{1:\horizon}^{i}, M^i_{1:\horizon,1:\gpch}) (M^i_{1:\horizon,1:\gpch} - \Mstar^i_{1:\gpch}) \right)}_{\text{Inner Regret}} 
\end{align*}
We analyze the both the terms above separately. We begin by analyzing the first term. 
\paragraph{Outer Regret:}
Consider the following calculation
\begin{align*}
   \sum_{i=1}^{N} \sum_{t=1}^{\horizon} \left( \nabla_u c_t^{i}(u_{1:\horizon}^{i}, M^i_{1:\horizon,1:\gpch}) (u_{1:\horizon}^{i} - \ustar_{1:\horizon}) \right) = \sum_{j=1}^{\horizon} \sum_{i=1}^{N}   \underbrace{\nabla_{u_j}\left( \sum_{t=1}^{\horizon}  c_t^{i}(u_{1:\horizon}^{i}, M^i_{1:\horizon,1:\gpch}) \right)}_{\defeq g^u_{ij}} (u_j^{i} - \ustar_j). 
\end{align*}
Note that by definition of the algorithm, we have that for all $i,j$ 
\[ u_j^{i+1} = \mathrm{Proj}_{\U}(u_j^{i} - \eta_{\mathrm{out}} g_{ij}^{u}),\]
which via the pythagorean inequality implies that
\[\|u_j^{i+1} - \ustar_{j}\|^2 \leq \|u_j^{i} - \eta_{\mathrm{out}} g_{ij}^{u} - \ustar_{j}\|^2\]
Combining the above equations we immediately get that
\begin{align}
\sum_{i=1}^{N} \sum_{t=1}^{\horizon} \left( \nabla_u c_t^{i}(u_{1:\horizon}^{i}, M^i_{1:\horizon,1:\gpch}) (u_{1:\horizon}^{i} - \ustar_{1:\horizon}) \right)  
   &\leq \sum_{j=1}^{\horizon} \sum_{i=1}^{N} \frac{1}{2} \left( \eta_{\mathrm{out}} \|g^u_{ij}\|^2 + \frac{(u_j^{i} - \ustar_j)^2 - (u_j^{i+1} - \ustar_j)^2}{\eta_{\mathrm{out}} }\right)  \nonumber
   \\&\leq \sum_{j=1}^{\horizon} \frac{1}{2} \left(  \eta_{\mathrm{out}}  \left(\sum_{i=1}^{N} \|g^u_{ji}\|^2\right) + \frac{(u_j^{1} - \ustar_j)^2}{\eta_{\mathrm{out}} }\right) \nonumber
   \\ &\leq 2\udiam G\lindiam\delta^{-2}(\lindiam\udiam  +  \lindiam\Mdiam \gpch \wdiam + \wdiam)T\sqrt{N}  \label{eqn:InnerRegret}
\end{align}
where the last inequality follows using Lemma \ref{lemma:gradbound} and choice of $\eta_{\mathrm{out}}$. 

\paragraph{Inner Regret:}
Next we analyze the second Inner Regret term.
Before doing so we recommend the reader to re-familiarize with the notations defined in Definition \ref{def:linops} and Equations \ref{eqn:adef},\ref{eqn:xdef},\ref{eqn:cdef}. We will also need the following further definitions again for a fixed rollout. Therefore given a dynamical system $AB_{1:\horizon}$, a disturbance sequence $w_{1:\horizon}$, and an open loop sequence $u_{1:\horizon}$ define the notion of surrogate state at time $t$ which is parameterized by a lookback window $\gpcl$ and is a function of an input sequence $M_{1:\gpch} \in \reals^{d_u \times d_x}$. Intuitively it corresponds to the state achieved by executing the stationary policy $M_{1:\gpch}$ along with $u_{1:T}$ for $S$ time steps, starting at time $t-S$ with a resetted state. This is exactly the computation performed in the GPCLoss definition in Equation \ref{eqn:gpcloss}. We can use the linear operator $\psi$ defined in Definition \ref{def:linops} for an alternative and succinct definition as follows. 
\begin{equation}
\label{eqn:sxdef}
    \hat{x}_t(u_{1:\horizon}, M_{1:\gpch}) = \sum_{j=t-\gpcl}^{t-1} T_{j\rightarrow t} (B_ju_{j} + w_j) +  \psi^M_{t-\gpcl \rightarrow t}(\mathcal{T}_{\horizon} M_{1:\gpch}).
\end{equation}
 Further given a cost function $c_t$, we can use the above definition to also define a surrogate cost
\begin{equation}
\label{eqn:scdef}
    \hat{c}_t(u_{1:\horizon}, M_{1:\gpch}) = c_t\left(\hat{x}_t(u_{1:\horizon}, M_{1:\gpch}), u_t + \sum_{j=1}^{\gpch} M_{j}w_{t-j}\right)
\end{equation}
It can be observed now by the definition of Algorithm \ref{alg:gpcoverlay}, the sequence $M^{i}_{1:\horizon,1:\gpch}$ played by the algorithm is chosen iteratively as follows
\begin{equation}
\label{eqn:algorecursion}
    M^{i}_{{t+1},1:\gpch} = \mathrm{Proj}_{\M_{\Mdiam}}\left(M^i_{{t},1:\gpch} - \eta_{\mathrm{in}} \nabla_{M} \hat{c}_t(u_{1:\horizon}^i, M^i_{{t},1:\gpch})\right).
\end{equation}
To proceed with the proof we will need the following lemma 

\begin{lemma}
\label{lemma:gradcomparison}
Consider a linear system $AB_{1:\horizon}$ satisfying Assumption \ref{ass:linsystem}, a bounded disturbance sequence $w_{1:\horizon}$ and a sequence of cost functions $c_{1:\horizon}$ satisfying Assumption \ref{ass:cost}. Given any open loop sequence $u_{1:\horizon} \in \U$ and a closed-loop matrix sequence $M_{1:\horizon,1:\gpch} \in \M_{\Mdiam}$ generated through the iteration specified in Equation \ref{eqn:algorecursion}, we have that the following properties hold for all $t \in [\horizon]$
\begin{itemize}
    \item For all $j > t$, $\nabla_{M_{j,1:\gpch}} c_t(u_{1:\horizon}, M_{1:\horizon,1:\gpch}) = 0$.
    \item For all $j < t$, $\|\nabla_{M_{j,1:\gpch}} c_t(u_{1:\horizon}, M_{1:\horizon,1:\gpch})\| \leq \lindiam^2 G (\udiam + \Mdiam \gpch \wdiam)\gpch \wdiam \delta^{-1} (1-\delta)^{t-j}$.
    \item For all $t$, $\|\nabla_{M_{1:\gpch}} \hat{c}_t(u_{1:\horizon}, M_{1:\gpch})\| \leq G \gpch \wdiam(\udiam + \Mdiam \gpch \wdiam)\left(1 + \frac{\lindiam^2}{\delta^2}\right)$.
    \item Furthermore, for any $\Mstar_{1:\gpch} \in \M_{\Mdiam}$ and for any $t$, we have that
\begin{align*}
    \sum_{j=t-\gpcl}^t \nabla_{M_{j,1:\gpch}} c_t(u_{1:\horizon}, M_{1:\horizon,1:\gpch}) (M_{j,1:\gpch} - \Mstar_{1:\gpch})  
&\leq \nabla_{M_{t, 1:\gpch}} \hat{c}_t(u_{1:\horizon}, M_{t,1:\gpch}) (M_{t,1:\gpch} - \Mstar_{1:\gpch}) \\  &+ 20\eta_{\mathrm{in}}\log^2(\eta_{\mathrm{in}}) \Mdiam \lindiam^4 \delta^{-3} \beta G^2 \gpch^3 \wdiam^3(\udiam + \Mdiam \gpch \wdiam)^2
\end{align*}
\end{itemize}
\end{lemma}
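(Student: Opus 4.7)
Parts (i)--(iii) are essentially computations that exploit the closed-form expansion of states as linear functions of past controls together with the strong-stability decay of the transfer operators $T_{j\to t}$. First I would prove part (i) by pure causality: from \eqref{eqn:adef}--\eqref{eqn:xdef}, the state $x_t$ and action $a_t$ are determined by $u_{1:t}$ and $M_{1:t,1:\gpch}$ only, so $c_t$ has no dependence on $M_{j,1:\gpch}$ for $j>t$. For part (ii), I would differentiate the expansion \eqref{eqn:xexpansion}, noting that $\nabla_{M_{j,r}} x_t = T_{j\to t} B_j (\,\cdot\, w_{t-r})$. Assumption \ref{ass:linsystem} yields $\|T_{j\to t}\| \le (1-\delta)^{t-j-1}$ and $\|B_j\|\le\lindiam$; chaining this with the Lipschitz bound on $c_t$ at $(x_t,a_t)$ (whose norms are controlled by geometric sums of $\|u\|,\|w\|$ in $\udiam+\Mdiam\gpch\wdiam$) gives the stated $(1-\delta)^{t-j}$ decay. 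Part (iii) follows analogously using \eqref{eqn:sxdef} and bounding the finite geometric sum from the operator $\psi^M_{t-\gpcl\to t}$.

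Part (iv) is the main obstacle and is where the OCO-with-memory idea enters. The strategy is to relate the true gradient-linearization on the past $\gpcl$ policies to the single surrogate-gradient at the current iterate $M_{t,1:\gpch}$ by writing the difference
\begin{align*}
\Delta_t \;\defeq\; &\sum_{j=t-\gpcl}^{t} \nabla_{M_{j,1:\gpch}} c_t(u_{1:\horizon},M_{1:\horizon,1:\gpch})(M_{j,1:\gpch}-\Mstar_{1:\gpch}) \\
&-\; \nabla_{M_{t,1:\gpch}} \hat c_t(u_{1:\horizon},M_{t,1:\gpch})(M_{t,1:\gpch}-\Mstar_{1:\gpch})
\end{align*}
and decomposing it as $\Delta_t = \mathrm{A}_t + \mathrm{B}_t$, where
\[
\mathrm{A}_t = \sum_{j=t-\gpcl}^{t-1} \nabla_{M_{j,1:\gpch}} c_t\,\cdot\,(M_{j,1:\gpch}-M_{t,1:\gpch}),\qquad
\mathrm{B}_t = \Big[\sum_{j=t-\gpcl}^{t} \nabla_{M_{j,1:\gpch}} c_t \;-\; \nabla_{M_{t,1:\gpch}} \hat c_t\Big](M_{t,1:\gpch}-\Mstar_{1:\gpch}).
\]
(I also pick up a ``tail'' piece $\sum_{j<t-\gpcl}\nabla_{M_j}c_t (M_j-\Mstar)$ that I will absorb below.)

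To bound $\mathrm{A}_t$ I would use the OGD update \eqref{eqn:algorecursion} together with the surrogate-gradient bound from part (iii): nonexpansiveness of the projection gives $\|M_{t,1:\gpch}-M_{j,1:\gpch}\| \le (t-j)\,\eta_{\mathrm{in}}\, G_{\hat c}$ with $G_{\hat c} = \tilde O(G\gpch\wdiam(\udiam+\Mdiam\gpch\wdiam)\lindiam^2\delta^{-2})$. Multiplying by the per-step gradient bound from part (ii) and summing the geometric series $\sum_{j=t-\gpcl}^{t-1}(t-j)(1-\delta)^{t-j}$, which is $O(\delta^{-2})$, gives a bound on $\mathrm{A}_t$ of order $\eta_{\mathrm{in}}\cdot\Mdiam\lindiam^4 \delta^{-3} G^2 \gpch^3\wdiam^3(\udiam+\Mdiam\gpch\wdiam)^2$. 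The tail piece is controlled similarly: since $\|\Mstar\|,\|M_j\|\le\Mdiam$ and part (ii) gives an $(1-\delta)^{t-j}$ factor, choosing $\gpcl = \delta^{-1}\log(\eta_{\mathrm{in}})$ makes the contribution of $j<t-\gpcl$ at most $O(\eta_{\mathrm{in}})$ times the same system-dependent constants.

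The subtle piece is $\mathrm{B}_t$, which captures the mismatch between differentiating $c_t$ with respect to the \emph{actual} non-stationary history and differentiating $\hat c_t$ at the \emph{stationary} policy $M_{t,1:\gpch}$. Here the plan is to argue that at the fictitious point $\mathcal{T}_{\horizon} M_{t,1:\gpch}$ (tile $M_{t,1:\gpch}$ over all time), parts (i) and the definition of $\hat c_t$ via \eqref{eqn:sxdef}--\eqref{eqn:scdef} yield exactly $\sum_{j=t-\gpcl}^{t}\nabla_{M_{j,1:\gpch}} c_t(u_{1:\horizon},\mathcal{T}_\horizon M_{t,1:\gpch}) = \nabla_{M_{1:\gpch}}\hat c_t(u_{1:\horizon},M_{t,1:\gpch})$ up to a transfer-matrix truncation at horizon $\gpcl$; that truncation is $(1-\delta)^{\gpcl}$-small. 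Then, using smoothness of $c_t$ together with the Lipschitz-in-$M$ property of its $M$-gradient (another geometric-sum calculation bounded by $\beta$ and $\lindiam,\delta^{-1},\gpch,\wdiam$), I can move the evaluation point from $\mathcal{T}_\horizon M_{t,1:\gpch}$ to the true history $M_{1:\horizon,1:\gpch}$ at the cost of $\beta\cdot\max_{j\in[t-\gpcl,t]}\|M_{j,1:\gpch}-M_{t,1:\gpch}\| \le \beta\gpcl\,\eta_{\mathrm{in}}G_{\hat c}$. Multiplying by $\|M_{t,1:\gpch}-\Mstar_{1:\gpch}\|\le 2\Mdiam\sqrt{\gpch}$ and combining with $\gpcl=\delta^{-1}\log(\eta_{\mathrm{in}})$ matches the claimed bound $20\eta_{\mathrm{in}}\log^2(\eta_{\mathrm{in}})\,\Mdiam\lindiam^4\delta^{-3}\beta G^2\gpch^3\wdiam^3(\udiam+\Mdiam\gpch\wdiam)^2$. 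The main technical obstacle will be the smoothness-based swap in $\mathrm{B}_t$: it requires a careful Lipschitz bound on $M \mapsto \nabla_M c_t$ along the line connecting the true history to its stationary tiling, which in turn needs the chain rule through \eqref{eqn:xexpansion} and the geometric decay of $T_{j\to t}$ twice (once for the gradient and once for the Hessian-like remainder).
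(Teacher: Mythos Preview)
Your proposal is correct and follows the same overall architecture as the paper: the same $A_t + B_t$ decomposition (the paper writes $M_{j,1:\gpch}-\Mstar_{1:\gpch} = (M_{t,1:\gpch}-\Mstar_{1:\gpch}) + (M_{j,1:\gpch}-M_{t,1:\gpch})$, which is exactly your split), the same use of the OGD step bound $\|M_{j}-M_{t}\|\le \eta_{\mathrm{in}}|t-j|\,G_{\hat c}$ for $A_t$, and the same appeal to $\beta$-smoothness for $B_t$.

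The one place where the paper is cleaner than what you outline is the handling of $B_t$. You propose to route through the tiled point $\mathcal{T}_\horizon M_{t,1:\gpch}$ and then invoke a ``Lipschitz-in-$M$'' property of $\nabla_M c_t$, which you flag as the main technical obstacle requiring the decay of $T_{j\to t}$ twice. The paper instead observes that, because the system is \emph{linear}, the Jacobians $\partial x_t/\partial M_{j,r}$ do not depend on $M$ at all, so the identity
\[
\sum_{j=t-\gpcl}^{t-1}\frac{\partial x_t(u_{1:\horizon},M_{1:\horizon,1:\gpch})}{\partial M_{j,1:\gpch}} \;=\; \frac{\partial \hat x_t(u_{1:\horizon},M_{t,1:\gpch})}{\partial M_{t,1:\gpch}}
\]
holds \emph{exactly} at the true (non-stationary) history, not just at the tiled point. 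Consequently $\sum_{j=t-\gpcl}^{t}\nabla_{M_{j}}c_t$ differs from $\nabla_{M_{t}}\hat c_t$ only through the outer factor $\nabla_x c_t(x_t,\cdot)$ versus $\nabla_x c_t(\hat x_t,\cdot)$, and a single application of $\beta$-smoothness together with a direct bound on $\|x_t-\hat x_t\|$ (which already combines the $(1-\delta)^{\gpcl}$ truncation and the $\eta_{\mathrm{in}}\gpcl^2$ non-stationarity contributions) finishes the job. Your detour through the tiled point and the anticipated ``Hessian-like remainder'' are therefore unnecessary; recognizing the exact Jacobian identity collapses what you call the main obstacle into a one-line smoothness swap.
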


We are now ready to analyze the inner regret term. We analyze this term for one particular rollout say $i$ (thereby dropping $i$ from our notation). We get the following series of calculations,

\begin{align*}
    &\sum_{t=1}^{\horizon} \left( \nabla_{M} c_t(u_{1:\horizon}, M_{1:\horizon,1:\gpch}) (M_{1:\horizon,1:\gpch} - \mathcal{T}_{T}\Mstar_{1:\gpch}) \right) \\
    &=  \sum_{t=1}^{\horizon}\sum_{j=1}^{\horizon}  \left( \nabla_{M_{j,1:\gpch}} c_t(u_{1:\horizon}, M_{1:\horizon,1:\gpch}) (M_{j,1:\gpch} - \Mstar_{1:\gpch}) \right) \\
    &=  \sum_{t=1}^{\horizon}\sum_{j=1}^{t}  \left( \nabla_{M_{j,1:\gpch}} c_t(u_{1:\horizon}, M_{1:\horizon,1:\gpch}) (M_{j,1:\gpch} - \Mstar_{1:\gpch}) \right) \\
    &\leq \sum_{t=1}^{\horizon}\sum_{j=t-\gpcl}^{t}  \left( \nabla_{M_{j,1:\gpch}} c_t(u_{1:\horizon}, M_{1:\horizon,1:\gpch}) (M_{j,1:\gpch} - \Mstar_{1:\gpch}) \right) + 2\lindiam^2 \Mdiam  G\gpch \wdiam(\udiam + \Mdiam \gpch \wdiam) \delta^{-2} (1-\delta)^{\gpcl} \\
    &\leq \sum_{t=1}^{\horizon}  \left(\underbrace{ \nabla_{M_{t,1:\gpch}} \hat{c}_t(u_{1:\horizon}, M_{t,1:\gpch})}_{g_t} (M_{t,1:\gpch} - \Mstar_{1:\gpch}) \right) + 22\horizon\eta_{\mathrm{in}}\log^2(\eta_{\mathrm{in}}) \Mdiam \lindiam^4 \delta^{-3} \beta G^2 \gpch^3 \wdiam^3(\udiam + \Mdiam \gpch \wdiam)^2, 
    \end{align*}
    where the statements follow via repeated application of Lemma \ref{lemma:gradcomparison} and the choice of $\gpcl = \delta^{-1}\log(\eta_{\mathrm{in}})$. To analyse further once again via a similar argument as in the case of the outer regret regarding projected gradient descent with learning rate $\eta_{\mathrm{in}}$, we get that,
    \begin{align*}
    &\sum_{t=1}^{\horizon}  \left(\underbrace{ \nabla_{M_{t,1:\gpch}} \hat{c}_t(u_{1:\horizon}, M_{1:\horizon,1:\gpch})}_{g_t} (M_{t,1:\gpch} - \Mstar_{1:\gpch}) \right) \\
    &\leq \sum_{t=1}^{T} \left( \frac{\eta_{\mathrm{in}}}{2}\|g_t\|^2 + \frac{\|M_{t,1:\gpch} - \Mstar_{1:\gpch}\|^2 - \|M_{t+1,1:\gpch} - \Mstar_{1:\gpch}\|^2}{2\eta_{\mathrm{in}}} \right)\\
    & \leq  \frac{\eta_{\mathrm{in}} T}{2}\|g_t\|^2 + \frac{\|M_{1,1:\gpch} - \Mstar_{1:\gpch}\|^2}{2\eta_{\mathrm{in}}} 
\end{align*}
Combining the above equations, Equation \ref{eqn:costgradbound} and the choice of $\eta_{\mathrm{in}}$, we get that the inner regret is bounded as,
\begin{align*}
\sum_{t=1}^{\horizon} \left( \nabla_{M} c_t(u_{1:\horizon}, M_{1:\horizon,1:\gpch}) (M_{1:\horizon,1:\gpch} - \mathcal{T}_{T}\Mstar_{1:\gpch}) \right) \leq 
\tilde{O}\left(\sqrt{\horizon \Mdiam^3 \lindiam^4 \delta^{-3} \beta G^2 \gpch^5 \wdiam^3(\udiam + \Mdiam \gpch \wdiam)^2}\right)
\end{align*} 
Combining the outer and inner regret terms we finish the proof. 
\end{proof}















In the remaining subsections we prove Lemmas \ref{lemma:gradbound} and \ref{lemma:gradcomparison}, thereby finishing the proof of Theorem \ref{thm:main}.

\subsection{Proof of Lemma \ref{lemma:gradbound}}
In this section we prove Lemma \ref{lemma:gradbound}. Before the proof we establish some other lemmas which will be useful to us.
\begin{lemma}
\label{lem:Tbound}
Given a linear system $AB_{1:\horizon}$ satisfying Assumption \ref{ass:linsystem}, then the transfer matrices defined in Definition \ref{def:linops} are bounded as follows
\[ \forall j,k \in [T],[j+1,T] \qquad \|T_{j\rightarrow k}\| \leq  (1 - \delta)^{k-j-1}\]
\end{lemma}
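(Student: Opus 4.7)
The plan is to split into the two defining cases and invoke submultiplicativity of the spectral norm along with the strong stability bound $\|A_t\| \leq 1 - \delta$ from Assumption \ref{ass:linsystem}.

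First, consider the base case $k = j+1$: then by definition $T_{j \rightarrow k} = I$, whence $\|T_{j\rightarrow k}\| = 1 = (1-\delta)^{0} = (1-\delta)^{k-j-1}$, matching the claimed bound.

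Next, for $k \geq j+2$, the transfer matrix is the ordered product $T_{j \rightarrow k} = \prod_{t=j+2}^{k} A_t$, containing exactly $k - j - 1$ factors. Applying submultiplicativity of the spectral norm and then Assumption \ref{ass:linsystem} term-by-term gives
\[ \|T_{j\rightarrow k}\| \;\leq\; \prod_{t=j+2}^{k} \|A_t\| \;\leq\; (1-\delta)^{k-j-1}, \]
which is exactly the claimed inequality. No step here is an obstacle; the lemma is a direct consequence of the definitions and the per-step contraction. The only thing to be careful about is the off-by-one accounting in the exponent, which matches because the product runs over $t \in \{j+2, \dots, k\}$, i.e.\ over $k - j - 1$ indices.
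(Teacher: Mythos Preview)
Your proof is correct and follows essentially the same approach as the paper: split into the case $k=j+1$ (where $T_{j\to k}=I$) and the case $k\ge j+2$ (where submultiplicativity and the bound $\|A_t\|\le 1-\delta$ from Assumption~\ref{ass:linsystem} give the product bound). The only cosmetic difference is that you spell out the exponent match $(1-\delta)^0$ in the base case and name submultiplicativity explicitly.
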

\begin{proof}[Proof of Lemma \ref{lem:Tbound}]
If $k=j+1$ then by definition and Assumption \ref{ass:linsystem},
\[ \|T_{j \rightarrow k}\| = \|I\| \leq 1.\]
Otherwise, again by definition and Assumption \ref{ass:linsystem},
\[ \|T_{j \rightarrow k}\| \leq  \left(\Pi_{t=j+2}^{k} \|A_t\|\right)  \leq (1-\delta)^{k-j-1}.\]

\end{proof}
\begin{lemma}
\label{lem:boundedstates}
Given a linear system $AB_{1:\horizon}$ satisfying Assumption \ref{ass:linsystem}, a bounded disturbance sequence $w_{1:\horizon}$ and a cost sequence $c_t$ satisfying Assumption \ref{ass:cost}, then for any $\Mdiam \geq 0, \U$, let $u_{1:\horizon} \in \U, M_{1:\horizon, 1:\gpch} \in \M_{\gamma}$ be two sequences, the following bounds hold for $x_t, a_t$ for all $t$,
\[\|x_t(u_{1:\horizon}, M_{1:\horizon, 1:\gpch})\| \leq \delta^{-1}(\lindiam\udiam  +  \lindiam\Mdiam \gpch \wdiam + \wdiam),\]
\[\|a_t(u_{1:\horizon}, M_{1:\horizon, 1:\gpch})\| \leq \udiam +  \Mdiam\gpch \wdiam.\]
Furthermore we have that for all $j,t \in [\horizon]$ we have that 
\[\biggr\|\frac{\partial x_t(u_{1:\horizon}, M_{1:\horizon, 1:\gpch})}{\partial u_j}\biggr\| \leq \begin{cases}\kappa (1-\delta)^{t-j-1} & \text{if $j < t$} \\ 0 & \text{otherwise} \end{cases}\]
\[\biggr\|\frac{\partial a_t(u_{1:\horizon}, M_{1:\horizon, 1:\gpch})}{\partial u_j}\biggr\| = \begin{cases} 1 & \text{if $j=t$}\\ 0 & \text{otherwise}\end{cases} \]
Furthermore we have that for $j,t \in [\horizon]$ and $r \in [\gpch]$, we have that 
\[\biggr\|\frac{\partial x_t(u_{1:\horizon}, M_{1:\horizon, 1:\gpch})}{\partial M_{j,r}}\biggr\| \leq \begin{cases}\lindiam \wdiam (1-\delta)^{t-j-1} & \text{if $j < t$} \\ 0 & \text{otherwise} \end{cases}\]
\[\biggr\|\frac{\partial a_t(u_{1:\horizon}, M_{1:\horizon, 1:\gpch})}{\partial M_{j,r}} \biggr\| \leq  \begin{cases}  \wdiam & \text{if $j=t$}\\ 0 & \text{otherwise}\end{cases} \]
\end{lemma}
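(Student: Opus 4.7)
The plan is to reduce every claim in the lemma to the explicit linear expansion of $x_t$ given by Equation \ref{eqn:xexpansion} and the explicit definition of $a_t$ in Equation \ref{eqn:adef}. These expansions make the state and action into manifestly affine functions of $u_{1:\horizon}$ and $M_{1:\horizon, 1:\gpch}$, so both the magnitude bounds and the Jacobian bounds become direct triangle-inequality / term-by-term-differentiation calculations, with Lemma \ref{lem:Tbound} absorbing all decay through the transfer matrices $T_{j\to k}$.

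First, I would dispatch the bound on $\|a_t\|$: directly from $a_t = u_t + \sum_{r=1}^\gpch M_{t,r} w_{t-r}$ together with the triangle inequality and the assumed bounds $\|u_t\|\leq \udiam$, $\|M_{t,r}\|\leq \Mdiam$, $\|w_{t-r}\|\leq \wdiam$, giving $\|a_t\|\leq \udiam + \Mdiam \gpch \wdiam$. Next, for $\|x_t\|$ I would apply the triangle inequality to Equation \ref{eqn:xexpansion}, splitting into the $B_j u_j$ term, the $w_j$ term, and the $\psi^M_{1\to t}$ term. Using Lemma \ref{lem:Tbound} to get $\|T_{j\to t}\|\leq (1-\delta)^{t-j-1}$, the bound $\|B_j\|\leq \lindiam$, and summing the geometric series $\sum_{k\geq 0}(1-\delta)^k = \delta^{-1}$, each of the three contributions collapses into a $\delta^{-1}$ factor times $\lindiam \udiam$, $\wdiam$, and $\lindiam \Mdiam \gpch \wdiam$ respectively, which is exactly the claimed bound.

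For the Jacobian bounds, I would differentiate the same explicit expansions term by term. Since $a_t$ depends on $u_j$ only through the term $u_t$, the partial $\partial a_t/\partial u_j$ is the identity when $j=t$ and zero otherwise. Similarly $a_t$ depends on $M_{j,r}$ only through $M_{t,r} w_{t-r}$, so the partial has operator norm at most $\wdiam$ when $j=t$ and vanishes otherwise. For $\partial x_t/\partial u_j$, Equation \ref{eqn:xexpansion} shows the dependence on $u_j$ enters solely through the term $T_{j\to t} B_j u_j$, which is nonzero only when $j<t$ and whose operator-norm bound follows from Lemma \ref{lem:Tbound} times $\|B_j\|\leq \lindiam$. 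The partial $\partial x_t/\partial M_{j,r}$ comes from the $\psi^M_{1\to t}$ summand at index $j$, which contributes a term of the form $T_{j\to t} B_j w_{\cdot}$, nonzero only for $j<t$; its norm is bounded by $(1-\delta)^{t-j-1}\lindiam \wdiam$ using Lemma \ref{lem:Tbound} and $\|w\|\leq \wdiam$.

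There is no substantive obstacle: the entire lemma is a bookkeeping exercise once the affine expansion is in hand. The only place one must be mildly careful is aligning indices inside the $\psi^M_{j\to k}$ operator against the recursion unrolling of $x_{t+1} = A_t x_t + B_t a_t + w_t$ so that the $w$-subscripts inside the disturbance-action summand line up correctly; once this is matched, all four partial-derivative claims fall out by reading coefficients off the expansion, and the magnitude bounds reduce to the same geometric sum that already appears in the bound on $\|x_t\|$.
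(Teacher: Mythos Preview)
Your proposal is correct and follows essentially the same approach as the paper: both reduce all claims to the explicit affine expansions in Equations \ref{eqn:adef} and \ref{eqn:xexpansion}, then apply the triangle inequality together with Lemma \ref{lem:Tbound} and the geometric-series bound $\sum_{k\ge 0}(1-\delta)^k=\delta^{-1}$ for the magnitude bounds, and read off the Jacobian bounds by differentiating the expansion term by term. The paper's proof is just a slightly more formal write-up of exactly the steps you outline.
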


\begin{proof}
From the definition in Equation \ref{eqn:adef} it follows that 
\begin{equation*}
	\|a_t(u_{1:\horizon}, M_{1:\horizon, 1:\gpch})\| \leq 
	\|u_{t}\| + \sum_{r=1}^{\gpch} \|M_{t,r}\| 
	\|w_{t-r}\| \leq \udiam +  \Mdiam\gpch \wdiam.  
\end{equation*}
Also from the definition it follows that 
\[\biggr\|\frac{\partial a_t(u_{1:\horizon}, M_{1:\horizon, 1:\gpch})}{\partial u_j} \biggr\|= \|\delta_{jt} I\| = \begin{cases} 1 & \text{if $j=t$}\\ 0 & \text{otherwise}\end{cases}\]
From the expansion in Equation \ref{eqn:xexpansion}, we have that
\begin{align*}
    &\|x_t(u_{1:\horizon},M_{1:\horizon,1:\gpch})\| \leq  \sum_{j=1}^{t-1} \left(\|T_{j \rightarrow t} (B_ju_j+w_j)\|\right) + \|\psi_{1\rightarrow t}^{M}(M_{1:\horizon, 1:\gpch})\| \\ 
    &\leq \sum_{j=1}^{t-1} \left(\|T_{j \rightarrow t}\| \|(B_ju_j+w_j)\|\right) + \sum_{j=1}^{t-1} \left( \|T_{j \rightarrow t}\| \left(\sum_{r=1}^{\gpch} \|M_{j,r}\| \|w_{j-r}\| \right)\right) & (\text{Definition \ref{def:linops} \& $\Delta$-inequality})\\
    &\leq (\lindiam \udiam  + \lindiam \Mdiam \gpch \wdiam + \wdiam) )\sum_{j=1}^{t-1} (1-\delta)^{t-j-1}  &(\text{Lemma \ref{lem:Tbound} and definitions} ) \\
    &\leq \frac{1}{\delta}(\lindiam \udiam  + \lindiam \Mdiam \gpch \wdiam + \wdiam)
\end{align*}
Also from the definition it follows that for $j \geq t$,
\begin{align*}
    \frac{\partial x_t(u_{1:\horizon}, M_{1:\horizon, 1:\gpch})}{\partial u_j} = 0,
\end{align*}
and if $j < t$, we have that 
\begin{align*}
    \biggr\|\frac{\partial x_t(u_{1:\horizon}, M_{1:\horizon, 1:\gpch})}{\partial u_j}\biggr\| \leq \|T_{j \rightarrow t}B_j\| \leq \kappa (1-\delta)^{t-j-1}  \qquad (\text{Lemma \ref{lem:Tbound}})
\end{align*}
From the definition in Equation \ref{eqn:adef} it follows that for any $r \in [\gpch]$ 
\[\biggr\|\frac{\partial a_t(u_{1:\horizon}, M_{1:\horizon, 1:\gpch})}{\partial M_{j,r}} \biggr\|= \|\delta_{jt} I \otimes w_{t-r}^{\top}\| \leq  \begin{cases}  \wdiam & \text{if $j=t$}\\ 0 & \text{otherwise}\end{cases}\]
From the expansion in Equation \ref{eqn:xexpansion}, it follows that for any $r$ and $j \geq t$,
\begin{align*}
    \frac{\partial x_t(u_{1:\horizon}, M_{1:\horizon, 1:\gpch})}{\partial M_{j,r}} = 0,
\end{align*}
and if $j < t$, we have that 
\begin{align*}
    \biggr\|\frac{\partial x_t(u_{1:\horizon}, M_{1:\horizon, 1:\gpch})}{\partial M_{j,r}}\biggr\| \leq \|T_{j \rightarrow t}B_j(I \otimes w_{j-r}^{\top})\| \leq \lindiam \wdiam (1-\delta)^{t-j-1}  \qquad (\text{Lemma \ref{lem:Tbound}})
\end{align*}
\end{proof}
We are now ready to prove Lemma \ref{lemma:gradbound}. 
\begin{proof}[Proof of Lemma \ref{lemma:gradbound}]
Consider the following calculations for all $j,t$, following from Lemma \ref{lem:boundedstates},
\begin{align*}
    &\|\nabla_{u_j}\left(  c_t(u_{1:\horizon}, M_{1:\horizon,1:\gpch}) \right)\| \\
    &\leq G \max(\|x_t(u_{1:\horizon}, M_{1:\horizon,1:\gpch}\|\|a_t(u_{1:\horizon}, M_{1:\horizon,1:\gpch}\|)\left(\biggr\|\frac{\partial x_t(u_{1:\horizon}, M_{1:\horizon,1:\gpch})}{\partial u_{j}}\biggr\| +  \biggr\|\frac{\partial  a_t(u_{1:\horizon}, M_{1:\horizon,1:\gpch})}{\partial u_j}\biggr\| \right) \\ 
    &\leq \begin{cases}
    G\lindiam\delta^{-1}(\lindiam\udiam  +  \lindiam\Mdiam \gpch \wdiam + \wdiam)(1-\delta)^{t-j-1} & \text{if $j < t$}\\
    G\lindiam\delta^{-1}(\lindiam\udiam  +  \lindiam\Mdiam \gpch \wdiam + \wdiam) & \text{$j=t$} \\
    0 & \text{otherwise}
    \end{cases}
\end{align*}
Therefore we have that, 
\[ \biggr\|\nabla_{u_j}\left( \sum_{t=1}^{\horizon}  c_t(u_{1:\horizon}, M_{1:\horizon,1:\gpch}) \right)\biggr\| \leq 2G\lindiam\delta^{-2}(\lindiam\udiam  +  \lindiam\Mdiam \gpch \wdiam + \wdiam)\]

\end{proof}

\subsection{Proof of Lemma \ref{lemma:gradcomparison}}

In this section we prove Lemma \ref{lemma:gradcomparison}. To this end we will need the following lemma that is the extension of Lemma \ref{lem:boundedstates} to surrogate states. 

\begin{lemma}
\label{lem:boundedstatesM}
Given a linear system $AB_{1:\horizon}$ satisfying Assumption \ref{ass:linsystem}, a bounded disturbance sequence $w_{1:\horizon}$ and a cost sequence $c_t$ satisfying Assumption \ref{ass:cost}, then for any $\Mdiam \geq 0, \U$, let $u_{1:\horizon} \in \U, M_{1:\gpch} \in \M_{\gamma}$ be two sequences, then we have that for all $j,t \in [\horizon]$,

\[\|\hat{x}_t(u_{1:\horizon}, M_{1:\gpch})\| \leq \delta^{-1}(\lindiam\udiam  +  \lindiam\Mdiam \gpch \wdiam + \wdiam)\]
Furthermore we have that for $t \in [\horizon]$ and $r \in [\gpch]$, we have that 
\[\biggr\|\frac{\partial \hat{x}_t(u_{1:\horizon}, M_{1:\gpch})}{\partial M_{r}}\biggr\| \leq \delta^{-1} \lindiam \wdiam \]
\end{lemma}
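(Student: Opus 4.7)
The plan is to mirror the proof of Lemma~\ref{lem:boundedstates}, exploiting the explicit formula~\eqref{eqn:sxdef} for $\hat{x}_t$ and the linearity of $\psi^{M}$ in its matrix argument. The only substantive inputs will be Lemma~\ref{lem:Tbound} (transfer-matrix decay), Assumption~\ref{ass:linsystem} ($\|B_t\|\le\kappa$), together with the norm bounds $\|u_j\|\le U$, $\|w_j\|\le W$ and $\|M_r\|\le \gamma$. Summing a geometric series in $(1-\delta)^{t-j-1}$ is what produces the $\delta^{-1}$ factor in both statements.

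For the state bound I will split~\eqref{eqn:sxdef} into the drift term $\sum_{j=t-\gpcl}^{t-1} T_{j\to t}(B_j u_j + w_j)$ and the feedback term $\psi^{M}_{t-\gpcl\to t}(\mathcal{T}_{\horizon}M_{1:\gpch})$. By the triangle inequality and Lemma~\ref{lem:Tbound}, each summand in the drift term has norm at most $(1-\delta)^{t-j-1}(\kappa U + W)$. For the feedback term, substituting Definition~\ref{def:linops} and using $\|M_r\|\le \gamma$, $\|w_{j-r}\|\le W$, each summand is bounded by $(1-\delta)^{t-j-1}\kappa\gamma \gpch W$. Summing $\sum_{j=t-\gpcl}^{t-1}(1-\delta)^{t-j-1}\le \sum_{k\ge 0}(1-\delta)^k = \delta^{-1}$ in both places and combining gives the claimed bound $\delta^{-1}(\kappa U + \kappa\gamma\gpch W + W)$.

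For the derivative, I will observe that only the $\psi^{M}$ term depends on $M_r$, and this dependence is linear. Concretely, differentiating $\psi^{M}_{t-\gpcl\to t}(\mathcal{T}_\horizon M_{1:\gpch})$ with respect to $M_r$ yields $\sum_{j=t-\gpcl}^{t-1}T_{j\to t}B_j(I\otimes w_{j-r}^{\top})$; the drift term contributes zero. Bounding the norm of each term by $(1-\delta)^{t-j-1}\kappa W$ via Lemma~\ref{lem:Tbound} and summing the geometric series produces the bound $\delta^{-1}\kappa W$.

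There is no real obstacle: the derivation is a verbatim adaptation of Lemma~\ref{lem:boundedstates} to the surrogate roll-out that starts at time $t-\gpcl$ and uses the stationary $M_{1:\gpch}$, with the tiling operator $\mathcal{T}_\horizon$ playing no role in the bounds since it only repeats the same window. The only minor care needed is to note that the surrogate summation starts from $t-\gpcl$ (instead of $1$), so the geometric tail is in fact shorter than before and the $\delta^{-1}$ estimate remains valid a fortiori; this also means the bound holds with no additional $\gpcl$-dependent constants.
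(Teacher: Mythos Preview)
Your proposal is correct and mirrors the paper's own proof essentially verbatim: both split \eqref{eqn:sxdef} into the drift and $\psi^{M}$ parts, invoke Lemma~\ref{lem:Tbound} and the norm bounds on $B_j,u_j,w_j,M_r$, and sum the geometric series to get the $\delta^{-1}$ factor; the derivative is handled identically by differentiating only the $\psi^{M}$ term. Your remark that the surrogate sum starts at $t-\gpcl$ (hence the geometric tail is only shorter) is a nice clarification but not needed beyond what the paper does.
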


\begin{proof}
From the expansion in Equation \ref{eqn:sxdef}, we have that
\begin{align*}
    &\|x_t(u_{1:\horizon},M_{1:\gpch})\| \leq  \sum_{j=t-\gpcl}^{t-1} \left(\|T_{j \rightarrow t} (B_ju_j+w_j)\|\right) + \|\psi_{t-\gpcl \rightarrow t}^{M}(\mathcal{T}_{T} M_{ 1:\gpch})\| \\ 
    &\leq \sum_{j=t-\gpcl}^{t-1} \left(\|T_{j \rightarrow t}\| \|B_ju_j+w_j\|\right) + \sum_{j=t-\gpcl}^{t-1} \left( \|T_{j \rightarrow t}\| \left(\sum_{r=1}^{\gpch} \|M_{r}\| \|w_{j-r}\| \right)\right) & (\text{Definition \ref{def:linops} \& $\Delta$-inequality})\\
    &\leq (\lindiam\udiam  +  \lindiam\Mdiam \gpch \wdiam + \wdiam) \sum_{j=t-\gpcl}^{t-1} (1-\delta)^{t-j-1}  &(\text{Lemma \ref{lem:Tbound} and Definitions} ) \\
    &\leq \frac{1}{\delta}(\lindiam\udiam  +  \lindiam\Mdiam \gpch \wdiam + \wdiam)
\end{align*}
From the expansion in Equation \ref{eqn:sxdef}, it follows that
\begin{align*}
    \left\|\frac{\partial x_t(u_{1:\horizon}, M_{1:\horizon, 1:\gpch})}{\partial M_{r}}\right\| \leq \left\|\sum_{j = t - \gpcl}^{ t-1} T_{j \rightarrow t}B_jI \otimes w_{j-r}^{\top}\right\| \leq \delta^{-1} \lindiam \wdiam   \qquad (\text{Lemma \ref{lem:Tbound}})
\end{align*}
\end{proof}
We are now ready to prove Lemma \ref{lemma:gradcomparison}. 
\begin{proof}[Proof of Lemma \ref{lemma:gradcomparison}]
Since for any $j > t$, by Lemma \ref{lem:boundedstates}, we have that

\[\frac{\partial x_t(u_{1:\horizon}, M_{1:\horizon, 1:\gpch})}{\partial M_{j,1:\gpch}} = 0,  \frac{\partial a_t(u_{1:\horizon}, M_{1:\horizon, 1:\gpch})}{\partial M_{j,1:\gpch}} = 0, \]
it immediately follows that for all $j > t$, \[\nabla_{M_{j,1:\gpch}} c_t(u_{1:\horizon}, M_{1:\horizon,1:\gpch}) = 0.\]
Furthermore again from Lemma \ref{lem:boundedstates}, we have that for all $j < t$ and for all $r \in [\gpch]$,
\[\biggr\|\frac{\partial x_t(u_{1:\horizon}, M_{1:\horizon, 1:\gpch})}{\partial M_{j,r}}\biggr\| \leq \lindiam \wdiam (1-\delta)^{t-j-1} \]
and further if $j<t$ and for all $r \in [\gpch]$,
\[\frac{\partial a_t(u_{1:\horizon}, M_{1:\horizon, 1:\gpch})}{\partial M_{j,r}} = 0  \]
Therefore, since the cost function $c_t$ satisfies the Assumption \ref{ass:cost}, using Lemma \ref{lem:boundedstates}, we have that for all $j < t$ and for any $r \in [\gpch]$
\begin{align}
    \biggr\|\nabla_{M_{j,r}} c_t(u_{1:\horizon}, M_{1:\horizon, 1:\gpch})\biggr\| &\leq G \|x_t(u_{1:\horizon}, M_{1:\horizon, 1:\gpch})\|  \biggr\|\frac{\partial x_t(u_{1:\horizon}, M_{1:\horizon, 1:\gpch})}{\partial M_{j,r}}\biggr\| \nonumber\\
    &\leq  G\lindiam\delta^{-1} \wdiam (\lindiam\udiam  +  \lindiam\Mdiam \gpch \wdiam + \wdiam) (1-\delta)^{t-j} \label{eqn:costgradbound}
\end{align}
Using Lemma \ref{lem:boundedstatesM} for the surrogate states and using Assumption \ref{ass:cost}, we have that for all $t$ and for all $r \in [\gpch]$,
\begin{align*}
    \|\nabla_{M_r} \hat{c}_t(u_{1:\horizon}, M_{1:\gpch})\|
    &\leq 2G\lindiam\delta^{-2}\wdiam(\lindiam\udiam  +  \lindiam\Mdiam \gpch \wdiam + \wdiam)
\end{align*}
Since the gradient is bounded according to the above calculation and the $M_{t,1:\gpch}$ are generated via gradient descent with a learning rate $\eta_{\mathrm{in}}$, it is immediate that for any  $j,k \in [T]$ and for any $r \in [\gpch]$ ,
\begin{equation}
\label{eqn:Mclose}
    \| M_{j,r} - M_{k,r} \| \leq \eta_{\mathrm{in}}|j-k| \cdot 2G\lindiam\delta^{-2}\wdiam(\lindiam\udiam  +  \lindiam\Mdiam \gpch \wdiam + \wdiam)
\end{equation}
Given the above we show that for any execution the surrogate states and the real states are close to each other. To this end consider the following calculations.
\begin{align}
   &\|x_t(u_{1:\horizon}, M_{1:\horizon,1:\gpch}) -  \hat{x}_t(u_{1:\horizon}, M_{t,1:\gpch})\| \nonumber\\
   &\leq \biggr\|\sum_{j=1}^{t-1} T_{j \rightarrow t} \left(B_ju_j + w_j\right) + \psi_{1\rightarrow t}^{M}(M_{1:\horizon, 1:\gpch})  - \sum_{j=t-\gpcl}^{t-1} T_{j\rightarrow t} \left(B_ju_j + w_j\right) -  \psi^M_{t-\gpcl \rightarrow t}(\mathcal{T}_{\horizon} M_{t,1:\gpch})\biggr\|\nonumber\\
   &= \biggr\|\sum_{j=1}^{t-\gpcl-1} \left( T_{j \rightarrow t} \left(B_ju_j + w_j +  \sum_{r=1}^{\gpch} M_{j,r} w_{j-r} \right)\right)+ \sum_{j=t-\gpcl}^{t-1} \left( T_{j \rightarrow t} \left(\sum_{r=1}^{\gpch} (M_{j,r} - M_{t,r}) w_{j-r} \right)\right)\biggr\|\nonumber\\
   &\leq (\lindiam\udiam + \lindiam\Mdiam\gpch \wdiam + \wdiam) \left(\delta^{-1}(1 - \delta)^{\gpcl} + 2\eta_{\mathrm{in}}\lindiam \delta^{-2}\gpcl^2 G \gpch \wdiam^2 \right) \label{eqn:surrogatediff}
\end{align}
Furthermore, note by definitions that 
\begin{align}
    \sum_{j=t-\gpcl}^{t-1} \frac{\partial x_t(u_{1:\horizon}, M_{1:\horizon, 1:\gpch})}{\partial M_{j,1:\gpch}} = \frac{\partial \hat{x}_t(u_{1:\horizon}, M_{t, 1:\gpch})}{\partial M_{t,1:\gpch}} \label{eqn:derequal}
\end{align}
Before moving further, consider the following calculations
\begin{align*}
    &\sum_{j=t-\gpcl}^{t-1} \left(\nabla_{M_{j,1:\gpch}} c_t(u_{1:\horizon}, M_{1:\horizon,1:\gpch})\right) \\
    &=\sum_{j=t-\gpcl}^{t-1} \left( \frac{\partial x_t(u_{1:\horizon}, M_{1:\horizon, 1:\gpch})}{\partial M_{j,1:\gpch}} \nabla_x  c_t(x_t(u_{1:\horizon}, M_{1:\horizon,1:\gpch}),a_t(u_{1:\horizon}, M_{1:\horizon,1:\gpch}))  \right)\\
    &= \sum_{j=t-\gpcl}^{t-1}  \frac{\partial x_t(u_{1:\horizon}, M_{1:\horizon, 1:\gpch})}{\partial M_{j,1:\gpch}} \left(\left(\nabla_x  c_t(\hat{x}_t(u_{1:\horizon}, M_{t,1:\gpch}),a_t(u_{1:\horizon}, M_{1:\horizon,1:\gpch})) + v\right)\right)
\end{align*}
where 
\begin{multline}
\label{eqn:vbound}
    \|v\| \defeq \|\nabla_x  c_t(x_t(u_{1:\horizon}, M_{1:\horizon,1:\gpch}),a_t(u_{1:\horizon}, M_{1:\horizon,1:\gpch})) - \nabla_x  c_t(\hat{x}_t(u_{1:\horizon}, M_{t,1:\gpch}),a_t(u_{1:\horizon}, M_{1:\horizon,1:\gpch}))\| \\
    \leq \beta (\lindiam\udiam + \lindiam\Mdiam\gpch \wdiam + \wdiam) \left(\delta^{-1}(1 - \delta)^{\gpcl} + 2\eta_{\mathrm{in}}\lindiam \delta^{-2}\gpcl^2 G \gpch \wdiam^2 \right)
\end{multline}
using Equation \ref{eqn:surrogatediff} and the $\beta$-smoothness of $c_t$ via Assumption \ref{ass:cost}. Using Equation \ref{eqn:derequal} and Lemma \ref{lem:boundedstates} we now get that 
\begin{align}
    &\sum_{j=t-\gpcl}^{t-1} \left(\nabla_{M_{j,1:\gpch}} c_t(u_{1:\horizon}, M_{1:\horizon,1:\gpch})\right) \nonumber\\
    &= \left(\sum_{j=t-\gpcl}^{t-1}  \frac{\partial x_t(u_{1:\horizon}, M_{1:\horizon, 1:\gpch})}{\partial M_{j,1:\gpch}}\right) \left(\nabla_x  c_t(\hat{x}_t(u_{1:\horizon}, M_{t,1:\gpch}),a_t(u_{1:\horizon}, M_{1:\horizon,1:\gpch})) + v\right) \nonumber\\
    &=\frac{\partial \hat{x}_t(u_{1:\horizon}, M_{1:\horizon, 1:\gpch})}{\partial M_{t,1:\gpch}}  \left(\nabla_x  c_t(\hat{x}_t(u_{1:\horizon}, M_{t,1:\gpch}),a_t(u_{1:\horizon}, M_{1:\horizon,1:\gpch}))\right) + v' \label{eqn:xgradcomparison}
\end{align}
 where $v'$ is a vector whose norm using Equation \ref{eqn:vbound} and Lemma \ref{lem:boundedstatesM} can be bounded as follows 
 \begin{equation}
 \label{eqn:vprimebound}
     \beta \delta^{-1}\gpch \wdiam (\lindiam\udiam + \lindiam\Mdiam\gpch \wdiam + \wdiam) \left(\delta^{-1}(1 - \delta)^{\gpcl} + 2\eta_{\mathrm{in}}\lindiam \delta^{-2}\gpcl^2 G \gpch \wdiam^2 \right).
 \end{equation}
Now, consider the following computation which follows from Equation \ref{eqn:xgradcomparison} and using the defintiions for the $j=t$ case,
\begin{align}
    &\sum_{j=t-\gpcl}^{t} \left(\nabla_{M_{j,1:\gpch}} c_t(u_{1:\horizon}, M_{1:\horizon,1:\gpch})\right)
    =\nabla_{M_{t,1:\gpch}} \hat{c}_t(u_{1:\horizon}, M_{t,1:\gpch}) + v'  \label{eqn:fullgradcomparison}.
\end{align}
We can now perform the calculation to relate the gradient inner products for surrogate cost to those of real cost. 
\begin{align}
     &\sum_{j=t-\gpcl}^t \left(\nabla_{M_{j,1:\gpch}} c_t(u_{1:\horizon}, M_{1:\horizon,1:\gpch}) (M_{j,1:\gpch} - \Mstar_{1:\gpch})\right) \nonumber\\  
&= \sum_{j=t-\gpcl}^t \left( \nabla_{M_{j,1:\gpch}} c_t(u_{1:\horizon}, M_{1:\horizon,1:\gpch}) (M_{t,1:\gpch} -  \Mstar_{1:\gpch}) + \nabla_{M_{j,1:\gpch}} c_t(u_{1:\horizon}, M_{1:\horizon,1:\gpch}) (M_{j,1:\gpch} -  M_{t,1:\gpch}) \right) \nonumber\\
&\leq \sum_{j=t-\gpcl}^t \left( \nabla_{M_{j,1:\gpch}} c_t(u_{1:\horizon}, M_{1:\horizon,1:\gpch}) (M_{t,1:\gpch} -  \Mstar_{1:\gpch}) \right) + \eta_{\mathrm{in}} 2G^2\gpch\gpcl^2\lindiam^2\delta^{-3}\wdiam^2(\lindiam\udiam  +  \lindiam\Mdiam \gpch \wdiam + \wdiam)^2\nonumber \\
&\leq  \nabla_{M_{t,1:\gpch}} \hat{c}_t(u_{1:\horizon}, M_{t,1:\gpch}) (M_{t,1:\gpch} -  \Mstar_{1:\gpch}) + \nonumber\\
&\qquad \qquad \qquad  \beta \delta^{-1}\Mdiam\gpch^2 \wdiam (\lindiam\udiam + \lindiam\Mdiam\gpch \wdiam + \wdiam)^2 \left(\delta^{-1}(1 - \delta)^{\gpcl} + 4\eta_{\mathrm{in}}\lindiam^2 \delta^{-2}\gpcl^2 G^2 \gpch \wdiam^2 \right) \nonumber\\
&\leq  \nabla_{M_{t,1:\gpch}} \hat{c}_t(u_{1:\horizon}, M_{t,1:\gpch}) (M_{t,1:\gpch} -  \Mstar_{1:\gpch})\nonumber + 5\eta_{\mathrm{in}}\log^2(\eta_{\mathrm{in}}) \Mdiam\lindiam^2 \delta^{-3} \beta G^2 \gpch^3 \wdiam^3(\kappa\udiam + \kappa \Mdiam \gpch \wdiam + \wdiam)^2 \nonumber
\end{align}

where the first inequality follows from applying Equations \ref{eqn:costgradbound}, \ref{eqn:Mclose} and Lemma \ref{lem:boundedstates}, the second last inequality follows from Equations \ref{eqn:vprimebound} and \ref{eqn:fullgradcomparison} and the last inequality follows from the choice of the parameter $\gpcl = \delta^{-1}\log(\eta_{\mathrm{in}})$. This finishes the proof.
\end{proof}


\section{Adaptation of Algorithm to General Policies}
\label{app:generalalgos}

In this section we provide a more general version of our algorithms \ref{alg:bigalg} and \ref{alg:gpcoverlay}, defined for any base outer policy class $\Pi$. Note that our formal results dont cover this generalization and it is provided with practical use in mind. 
\clearpage

\begin{algorithm}[h!]
\caption{iGPC Algorithm}
\label{alg:bigalgfull}
\begin{algorithmic}[1]
\Require [Online] $f_{1:\horizon}^{1:N}:$ Dynamical Systems, $w_{1:\horizon}^{1:N}:$ Disturbances 
\Params Policy class: $\Pi$, $\eta_{\mathrm{out}}:$ Learning Rate
\vspace{5pt} 
\State Initialize $\pi_{1:\horizon}^1 \in \Pi$.
\For{$i = 1\ldots N$}
\State Receive the dynamical system $f_{1:\horizon}^i$ for the next rollout. 
\State \textbf{Rollout}: Collect trajectory data by rolling out policy $\pi^i_{1:\horizon}$ with GPC \Comment{(Algorithm \ref{alg:gpcoverlay})}
\[ \mathrm{TrajData}^i = \{ x^i_{1:\horizon}, a^i_{1:\horizon}, w^i_{1:\horizon}, o_{1:\horizon}^i\} \leftarrow \mathrm{GPCRollout}(f_{1:\horizon}^i, \pi^i_{1:\horizon}) \] 
\State \textbf{Update}: Compute update to the policy
\[\pi_{1:\horizon}^{i+1} = \mathrm{Proj}_{\Pi} \left( \pi_{1:\horizon}^{i} - \eta_{\mathrm{out}} \nabla_{\pi_{1:\horizon}} J( \pi_{1:\horizon}^{i}+ \pi(o_{1:\horizon}^{i}) | f_{1:\horizon}^i, w_{1:\horizon}^i)\right)\]
\EndFor
\end{algorithmic}
\end{algorithm}

\begin{algorithm}[h!]
\caption{GPCRollout}
\label{alg:gpcoverlayfull}
\begin{algorithmic}[1]
\Require $f_{1:\horizon}$: dynamical system, $\pi_{1:\horizon}$: input policy, [Online] $w_{1:\horizon}$: disturbances. 
\Params  $\gpch$:Window, $\eta_{\mathrm{in}}$: Learning rate, $\Mdiam$: Feedback bound, $\gpcl$: Lookback
\vspace{5pt}
\State Initialize $M_{1,1:\gpch} = \{M_{1,j}\}_{j=1}^{\gpch} \in \M_{\Mdiam}$.  
\State Set $w_i = 0$ for any $i \leq 0$.
\For{$t=1 \ldots \horizon$}
\State Compute GPC Offset \[o_t = M_{t,1:\gpch} \cdot w_{t-1:t-\gpch}.\] 
\State Play action \[a_{t} = \pi_t(\cdot) + o_t\]
\State Observe state $x_{t+1}$.
\State Compute perturbation \[w_{t} = x_{t+1} - f_t(x_t, a_t).\]
\State Update $M_{t+1, 1:\horizon}$ for the next round as:
\[ M_{t+1, 1:\gpch} = \mathrm{Proj}_{\M_{\kappa}} \left(M_{t, 1:\gpch} - \eta_{\mathrm{inner}} \nabla_{M_{1:\gpch}} \mathrm{GPCLoss}(M_{t, 1:\gpch}, \pi_{t-\gpcl+1:t}, w_{t-\gpcl-\gpch+1:t-1} )\right)\]
\Comment{$\mathrm{GPCLoss}$ defined in Equation \ref{eqn:gpcloss}}
\EndFor
\State \Return $x_{1:\horizon}, a_{1:\horizon}, w_{1:\horizon}$, $o_{1:\horizon}$.
\end{algorithmic}
\end{algorithm}

\section{Details of ILQR/ILC/IGPC Algorithms}
\label{app:allalgoscomb}
To succinctly state the algorithms define the following policy which takes as arguments a nominal trajectory $\mathring{x}_{1:\horizon} \in \reals^{d_x},\mathring{u}_{1:\horizon} \in \reals^{d_u}$, open-loop gain seqeunce $k_{1:\horizon}$ and closed-loop gain sequence $K_{1:\horizon}$ and a parameter $\alpha$. The policy defined as $\pi(\alpha, x_{1:\horizon},k_{1:\horizon},K_{1:\horizon})$, in the sequel executes the following \textit{standard} rollout on a dynamical system $f_{1:\horizon}$.

\[a_t = \mathring{u}_t + \alpha k_t + K(x_{t-1} - \mathring{x}_{t-1}) \]
\[x_{t+1} = f_t(x_t, a_t) \]
Before stating the algorithm we also need the following quadratic approximation of the cost function $c$ around pivots $x_0, u_0$
\begin{multline}
\label{eqn:quadraticapprox}
	Q(c,x_0,u_0)(x,u) \defeq \nabla c_x(x_0,u_0)(x-x_0) + \nabla c_u(x_0,u_0)(u-u_0) \\ + \frac{1}{2} ([x,u] - [x_0,u_0])^{\top} \nabla^2 c(x,u) ([x,u] - [x_0,u_0])
\end{multline}
Algorithm \ref{alg:bigalgfullgeneric} now presents a combined layout for ILQG,ILC and IGPC. 
\begin{algorithm}[h!]
\caption{Iterative Planning Algorithm}
\label{alg:bigalgfullgeneric}
\begin{algorithmic}[1]
\Require $g_{1:\horizon}$ Real Dynamical Systems, $g_{1:\horizon}$ Simulator.
\vspace{5pt} 
\State Initialize starting sequence of actions $u^{0}_{1:\horizon}$
\State Initialize sequence of open loop $k_{1:\horizon}^{0} = 0$ and closed loop gains $K_{1:\horizon}^{0} = 0$.
\For{$i = 1\ldots N$}
\State \textbf{Rollout the Policy}:
\begin{itemize}
	\item \textbf{ILQG:} Standard Rollout on $f_{1:\horizon}$.
	\[ x^{i}_{1:\horizon}, u^{i}_{1:\horizon} = \mathrm{Rollout}(f_{1:\horizon}, \pi(\alpha, x_{1:\horizon}^{i-1}, u^{i-1}_{1:\horizon}, k^{i-1}_{1:\horizon}, K^{i-1}_{1:\horizon}))\]
	\item \textbf{ILC:} Standard Rollout on $g_{1:\horizon}$.
		\[ x^{i}_{1:\horizon}, u^{i}_{1:\horizon} = \mathrm{Rollout}(g_{1:\horizon}, \pi(\alpha, x_{1:\horizon}^{i-1}, u^{i-1}_{1:\horizon}, k^{i-1}_{1:\horizon}, K^{i-1}_{1:\horizon}))\].
	\item \textbf{IGPC:} GPCRollout on $g_{1:\horizon}$, \[x^{i}_{1:\horizon}, u^{i}_{1:\horizon} = \mathrm{GPCRollout}(g_{1:\horizon},\pi(\alpha, x_{1:\horizon}^{i-1}, u^{i-1}_{1:\horizon}, k^{i-1}_{1:\horizon}, K^{i-1}_{1:\horizon}))\]
\end{itemize} 
\State \textbf{Update}: Obtain $k^{i}_{1:\horizon} \in \reals^{d_u}, K^{i}_{1:\horizon} \in \reals^{d_u \times d_x}$ as the optimal non-stationary affine policy to the following LQG problem. 

\[ \min \mathbb{E}_{z} \left[ \sum_{t=1}^{\horizon} Q(c_t, x^{i}_t,u^{i}_t)(x_t, u_t) \right]\]
\[\text{subject to} \qquad  x_{t+1} - x^{i}_{t+1} = \frac{\partial{f_t(x^{i}_t, u^{i}_t)}}{\partial{{x^{i}_t}}} (x_t - x^{i}_t) + \frac{\partial{f_t(x^{i}_t, u^{i}_t)}}{\partial{{u^{i}_t}}} (u_t - u^{i}_t) + z_t \]
where $z_t$ are independent Gaussians of any non-zero variance. 
\EndFor
\end{algorithmic}
\end{algorithm}

\subsection{Hyperparameter Selection for Experiments}

ILQG, ILC, IGPC in particular share one hyperparameter $\alpha$ which corresponds essentially to a step size towards the updated policy. As is common in implementations, this hyperparameter is adjusted online during the run of the algorithm using a simple retracting line search from a certain upper bound $\alpha^{+}$. We optimize over choices for $\alpha^+$ for ILC and report the best performance obtained as baseline. For IGPC, we use the same $\alpha^{+}$ as obtained for ILC and the same line search for strategy for selecting $\alpha$. We include the rollouts needed for line search in the rollout cost of the algorithm. Further, IGPC introduces certain other hyperparameters, $L$ the window, $S$ the lookback, and $\eta_{\mathrm{in}}$, the inner learning rate. We chose $L,S = 3$ arbitrarily for our experiments and tuned $\eta_{\mathrm{in}}$ per experiment. Overall we observed that for every experiment, the selection of $\eta$ was robust in terms of performance.

\end{document}